\documentclass[11pt]{article}

\usepackage[preprint]{acl}

\usepackage{times}
\usepackage{latexsym}

\usepackage[T1]{fontenc}

\usepackage[utf8]{inputenc}

\usepackage{microtype}

\usepackage{inconsolata}

\usepackage{graphicx}

\usepackage{xurl}
\usepackage{amsthm}
\usepackage{amsmath}
\usepackage{amssymb}
\usepackage{booktabs}
\usepackage{multirow}
\usepackage{hyperref}
\theoremstyle{plain}
\usepackage[table]{xcolor}
\newtheorem{lemma}{Lemma}[section]

\title{ScaleSweep: Accurate NVFP4 Post-Training Quantization of LLMs via Block Scale Initialization}

\author{
 \textbf{Li Lin},
 \textbf{Xiaojun Wan},
\\
 Wangxuan Institute of Computer Technology, Peking University,
\\
 \small{
   efsotr\_l@stu.pku.edu.cn, wanxiaojun@pku.edu.cn
 }
}

\begin{document}
\maketitle
\begin{abstract}
NVFP4 is a recently introduced hardware-supported FP4 format that improves the fidelity of 4-bit quantization through fine-grained block scales. However, existing NVFP4 scale initialization methods still primarily rely on AbsMax initialization, which leaves a noticeable gap to the optimal solution.
To address this, we propose ScaleSweep, a simple and efficient scale optimization method that sweeps over feasible block scale candidates and selects the candidate that minimizes a target objective. We further provide a theoretical analysis of NVFP4 quantization and derive both lower and upper bounds for the required sweep range under mean square error (MSE) and weighted mean square error (WMSE) between the original tensor and the quantized reconstructed tensor. The proposed bounds substantially reduce the sweep space while preserving the optimal candidate, enabling negligible overhead compared with the baseline quantization operators.
Experiments on Llama and Qwen models demonstrate that ScaleSweep consistently improves quantization performance over existing initialization methods and further narrows the gap to full precision. In particular, under aggressive end-to-end quantization of weights, activations, KV cache, and query states, ScaleSweep preserves more than 93\% of the full-precision performance.
\end{abstract}

\section{Introduction}

Recent advances in large language models (LLMs) have substantially increased memory footprint, bandwidth demand, and computational cost during deployment. Post-training quantization (PTQ)~\citep{ptq} has therefore become a key approach for efficient inference, enabling model compression without retraining or full fine-tuning~\citep{gptq,smoothquant,quarot,spinquant,ostquant}. Among low-precision quantization schemes, NVFP4 is particularly notable for combining an FP4 E2M1 format with both FP8 micro-block scales and a tensor-level global scale, with native support on NVIDIA Blackwell GPUs~\citep{nvfp4_intro}. This combination reduces memory and bandwidth requirements while preserving greater numerical flexibility than integer-only formats~\citep{int_vs_fp,mr_gptq}. The FP8 micro-block scaling design of NVFP4 enables practical low-bit LLM inference under aggressive compression, making scale optimization increasingly critical in fine-grained low-precision quantization.

Despite the advantages of NVFP4, existing PTQ methods exhibit different behaviors under this format. Some methods, such as GPTQ~\citep{gptq} and SmoothQuant~\cite{smoothquant}, remain applicable to NVFP4, whereas rotation-based approaches~\citep{quarot,spinquant} may degrade performance~\citep{mr_gptq}. This difference stems from two key distinctions between NVFP4 and conventional INT4 quantization: the use of micro-block scaling and FP4 data types. Various scale initialization techniques have been proposed for INT quantization~\citep{leanquant,neuqi}, but they are not directly applicable to NVFP4 due to its two-level scaling structure. Existing NVFP4 initialization methods still primarily rely on AbsMax-based heuristics, including the 4/6 strategy~\citep{four_over_six}, where a noticeable gap to the optimal solution remains. These characteristics make scale optimization and error distribution in NVFP4 fundamentally different from those in both INT4 and single-level FP quantization, thereby motivating dedicated scale optimization methods for NVFP4.

\begin{figure*}[t]
  \centering
  \includegraphics[width=0.95\textwidth]{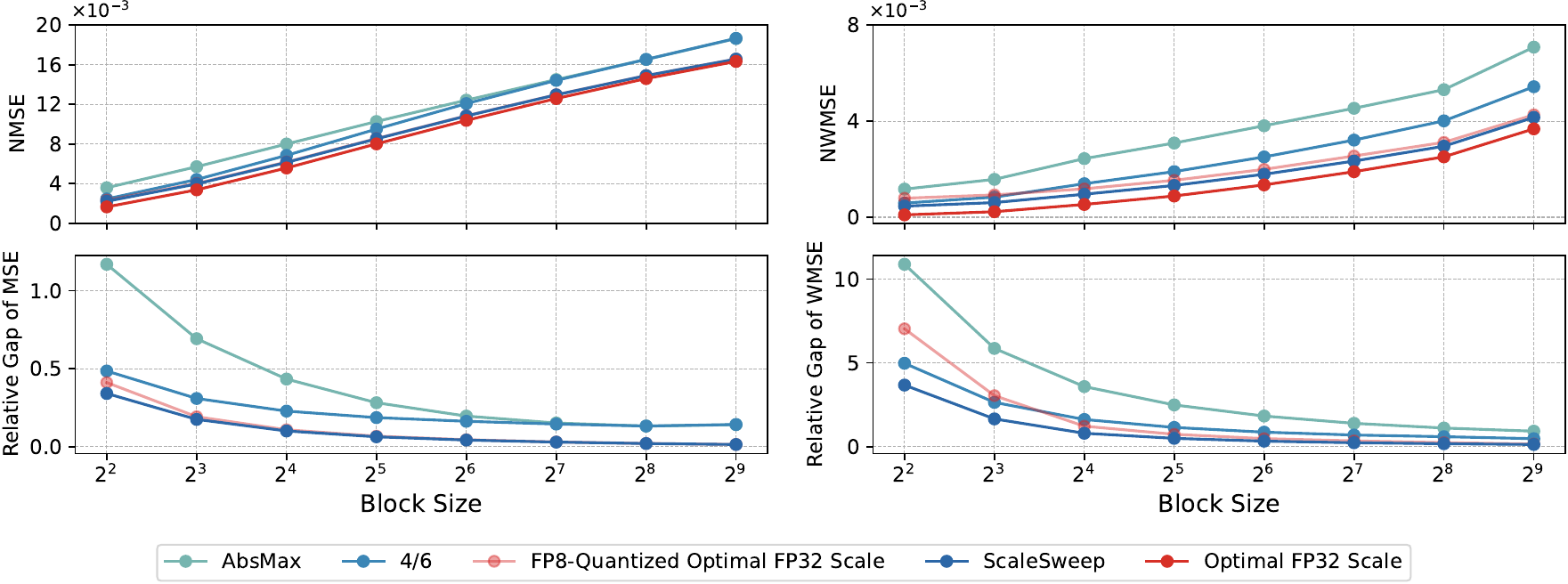}
  \caption{Normalized MSE and WMSE, together with their relative gaps to the optimal FP32 block scale, between the original tensor and the quantized reconstructed tensor under different NVFP4 block sizes using AbsMax, 4/6, ScaleSweep, and the FP8-quantized optimal FP32 scale. Definitions are provided in Section~\ref{sec:preliminary}.}
  \label{fig:vary_block_size}
\end{figure*}

Existing NVFP4 scale initialization strategies such as AbsMax and 4/6 rely on simple heuristics based on the maximum representable FP4 values. However, as shown in Figure~\ref{fig:vary_block_size}, they still exhibit a noticeable gap compared with the FP8-quantized optimal FP32 block scale\footnote{For both the MSE and WMSE objectives, the optimal FP32 block scale can be solved exactly with low computational complexity. Details are provided in Appendix~\ref{appendix:optimal-scale-for-fp4}.} across different block sizes. This observation suggests that there remains substantial room for improving FP8 block scale selection. Since the number of representable FP8 scales is very limited, exhaustive scale sweep becomes computationally feasible. To this end, we propose ScaleSweep, an NVFP4-specific scale sweep method designed for FP4 quantization with FP8 block scales. For FP4 quantization, we further provide a theorical analysis of block scale optimization under both MSE and WMSE objectives. In particular, through theorical analysis and computer-assisted analysis, we derive theoretically justified lower and upper bounds for the optimal FP8 block scale, thereby reducing the feasible sweep range to a compact local neighborhood in the FP8 bit-pattern space and enabling efficient scale sweep. We evaluate ScaleSweep under increasingly challenging quantization settings, including weight-activation quantization, weight-activation quantization with KV-cache quantization, and weight-activation quantization with both KV-cache and query state quantization. Across all settings, ScaleSweep generally achieves stronger recovery than existing initialization methods for NVFP4. 

Our main contributions are summarized below:
\begin{itemize}
\item We analyze FP4 quantization with FP8 block scales and derive lower bound and upper bound for the optimal block scale under MSE and WMSE objectives.

\item Based on the derived bounds, we propose ScaleSweep, an NVFP4-specific calibration method that restricts FP8 block scale optimization to a compact interval in the bit-pattern space, enabling efficient scale selection for RTN and GPTQ pipelines.

\item We validate the effectiveness of ScaleSweep on Llama and Qwen models across weight-activation, KV-cache, and query-state quantization settings, where ScaleSweep generally outperforms existing initialization methods and recovers up to $93$--$95\%$ of BF16 performance in the most aggressive setting, while introducing negligible operator overhead compared with the default NVFP4 quantization operators in vLLM.
\end{itemize}

\section{Related Work}

\paragraph{Integer quantization.}
Post-training quantization for integer quantization has been extensively studied for efficient large language model (LLM) inference. GPTQ~\cite{gptq} improves low-bit quantization through layer-wise reconstruction with approximate second-order information, while SmoothQuant~\citep{smoothquant} mitigates activation outliers through smoothing transformations between weights and activations. More recent work further improves low-bit quantization by reshaping tensor distributions before quantization. QuaRot~\citep{quarot} applies randomized Hadamard rotations to remove activation outliers and support 4-bit inference in rotated LLMs; SpinQuant~\citep{spinquant} learns rotation transformations to better align tensors with low-bit quantization grids; and OSTQuant~\citep{ostquant} combines orthogonal and scaling transformations to refine quantization through improved distribution fitting. Together, these methods show that reducing outliers and smoothing quantization-unfriendly distributions are central to accurate low-bit INT PTQ, where Hadamard, rotation, and orthogonal transformations have become increasingly important techniques.

\paragraph{FP4 quantization.}
FP4 quantization has recently emerged as an important direction for efficient low-precision LLM inference, particularly with the introduction of NVIDIA's NVFP4 format~\citep{nvfp4_intro}. Recent studies have begun to investigate FP4 quantization for both pretraining and post-training settings. NVIDIA's NVFP4 pretraining work demonstrates the feasibility of training large language models with NVFP4 precision~\citep{nvidia_pretrain_nvfp4}, while TetraJet-v2~\citep{tetrajetv2} further improves NVFP4 training accuracy by addressing weight oscillation and outlier issues during low-precision training. In terms of scale initialization, 4/6~\citep{four_over_six} extends AbsMax scaling by additionally evaluating a scale that maps the block maximum to 4 instead of 6 and selecting the lower-error quantization. MR-GPTQ~\citep{mr_gptq} shows that directly applying rotation transformations such as QuaRot and SpinQuant can degrade performance under NVFP4 quantization, and instead proposes micro-rotation on top of GPTQ for hardware-supported FP4 formats. These results suggest that FP4 quantization requires techniques tailored to floating-point codebooks and microscaling structures, particularly for scale selection and format-aware reconstruction, rather than a direct reuse of INT4 quantization methods.

\section{Preliminary} \label{sec:preliminary}

\subsection{Notation} \label{sec:notation}

Let $[N]$ denote the index set $\{0,1,2,\dots,N-1\}$. 
For an original tensor $x$ and its reconstructed tensor $\hat{x}$ of size $n$, the mean square error (MSE) and normalized mean square error (NMSE) are defined as $ \mathrm{MSE}(x,\hat{x})=\|x-\hat{x}\|_2^2/n$ and $ \mathrm{NMSE}(x,\hat{x})=\|x-\hat{x}\|_2^2/\|x\|_2^2 $, respectively. Given an element-wise weight tensor $w$, the weighted mean square error (WMSE) and normalized weighted mean square error (NWMSE) are defined as $ \mathrm{WMSE}(x,\hat{x},w)=(\sum_i w_i(x_i-\hat{x}_i)^2)/(\sum_i w_i)$ and $ \mathrm{NWMSE}(x,\hat{x},w)=(\sum_i w_i(x_i-\hat{x}_i)^2)/(\sum_i w_i x_i^2) $, respectively.

For a numeric format $\mathcal{F}$, let $\mathcal{G}_\mathcal{F}$ denote its representable value set. In particular, $\mathcal{G}_{\mathrm{FP4}}$, $\mathcal{G}_{\mathrm{FP8}}$, and $\mathcal{G}_{\mathrm{FP32}}$ denote the representable value set of FP4 E2M1, FP8 E4M3, and FP32, respectively, where $\mathcal{G}_{\mathrm{FP4}}$ and $\mathcal{G}_{\mathrm{FP8}}$ are also referred to as quantization grids.

For $x \in \mathbb{R}$, let $\left\lfloor x \right\rceil_{\mathcal{F}}$ denote rounding $x$ to the nearest value in $\mathcal{G}_{\mathcal{F}}$, let $\left\lfloor x \right\rfloor_{\mathcal{F}}$ denote rounding $x$ downward to the largest value in $\mathcal{G}_{\mathcal{F}}$ not exceeding $x$, and let $\left\lceil x \right\rceil_{\mathcal{F}}$ denote rounding $x$ upward to the smallest value in $\mathcal{G}_{\mathcal{F}}$ not smaller than $x$.

For FP4 quantization with scale $s$, define the scaled format $\mathrm{FP4}(s)$ as the standard FP4 format scaled by $s$. Its representable value set is
\begin{equation}
\mathcal{G}_{\mathrm{FP4}(s)} = \{ s \cdot v \mid v \in \mathcal{G}_{\mathrm{FP4}} \}.
\end{equation}

Let $\mathbf{w} \in \mathbb{R}_{\ge 0}^n$ denote non-negative weights. The weighted quantization loss under scale $s$ is
\begin{equation} \label{eq:weighted_quantization_loss}
\mathcal{L}(s; \mathbf{x}, \mathbf{w}) = \sum_{i=0}^{n-1} w_i \left(x_i - \left\lfloor x_i \right\rceil_{\mathrm{FP4}(s)}\right)^2.
\end{equation}
When $\mathbf{w}=\mathbf{1}$, the objective reduces to the unweighted loss
\begin{equation} \label{eq:mse_quantization_loss}
\mathcal{L}(s; \mathbf{x}) = \sum_{i=0}^{n-1} \left(x_i - \left\lfloor x_i \right\rceil_{\mathrm{FP4}(s)}\right)^2.
\end{equation}
For simplicity, both formulations are denoted by $\mathcal{L}$ when clear from context.

\subsection{NVFP4 Quantization}

NVFP4 combines FP4 values with two-level scaling to improve quantization fidelity under low bit-width constraints. The FP4 format used in NVFP4 follows the OCP Microscaling Formats Specification~\citep{ocp_mx} with representable values
\begin{equation}
\mathcal{G}_{\mathrm{FP4}} = \{0, \pm0.5, \pm1, \pm1.5, \pm2, \pm3, \pm4, \pm6\}.
\end{equation}

Given an input tensor $\mathbf{x} \in \mathbb{R}^N$, NVFP4 represents it as $(\mathbf{q}, \mathbf{s}, S)$, where $\mathbf{q} \in \mathcal{G}_{\mathrm{FP4}}^N$ denotes FP4 values, $\mathbf{s} \in \mathcal{G}_{\mathrm{FP8}}^{N/16}$ denotes FP8 E4M3 micro-block scales shared across every 16 elements, and $S \in \mathcal{G}_{\mathrm{FP32}}$ denotes a global FP32 scale. The reconstructed tensor $\hat{\mathbf{x}}$ is computed as
\begin{equation}
\hat{x}_i = q_i \cdot s_{\lfloor i/16 \rfloor} \cdot S.
\end{equation}

For NVFP4 quantization, a widely adopted initialization strategy is AbsMax~\citep{nvidia_pretrain_nvfp4}:
\begin{align}
S &= \frac{\max_i |x_i|}{448 \cdot 6}, \\
s_k &= \left\lfloor \frac{\max_{\lfloor i/16 \rfloor = k} |x_i|}{S \cdot 6} \right\rceil_{\mathrm{FP8}}, \  k \in [N/16], \\
q_i &= \left\lfloor \frac{x_i}{S \cdot s_{\lfloor i/16 \rfloor}} \right\rceil_{\mathrm{FP4}}, \  i \in [N],
\end{align}
where $448$ and $6$ are the maximum representable magnitudes of FP8 E4M3 and FP4 E2M1, respectively.

\section{Method}

\subsection{Optimization Objective}


\paragraph{MSE Objective}
For a weight tensor $W \in \mathbb{R}^{d_{\mathrm{in}} \times d_{\mathrm{out}}}$, reconstruction fidelity under NVFP4 quantization is naturally measured by the MSE objective. The resulting optimization objective is
\begin{align} \label{eq:nvfp4_MSE}
\mathcal{L}_{\mathrm{NVFP4}}^\mathrm{MSE}(\mathbf{s}, S; W)
&=
\sum_{i=0}^{d_{\mathrm{in}}/16-1}
\sum_{j=0}^{d_{\mathrm{out}}-1} \notag \\
&\quad
\mathcal{L}\!\Big(s_{i,j}\!\cdot\!S;\,
W_{16i:16(i+1),\,j}
\Big).
\end{align}

\paragraph{WMSE Objective}
For weight quantization, reconstruction error depends not only on the weight tensor itself, but also on the input activations propagated through the linear layer. Consider the local reconstruction error of the linear layer
\begin{align} \label{eq:linear_reconstruction_loss}
& \|XW - X\hat{W}\|_F^2 \notag \\
& = \mathrm{tr}\!\left((W-\hat{W})^T X^T X (W-\hat{W})\right),
\end{align}
where $X \in \mathbb{R}^{T \times d_{\mathrm{in}}}$ is the input activation matrix and $\hat{W} \in \mathbb{R}^{d_{\mathrm{in}} \times d_{\mathrm{out}}}$ is the quantized weight tensor.

Directly optimizing Eq.~\eqref{eq:linear_reconstruction_loss} is impractical for quantization parameter optimization because the full Gram matrix $X^T X$ couples quantization errors across different input channels. Therefore, we adopt the diagonal approximation commonly used in model compression and quantization~\citep{obd,squant,vptq}, retaining only the diagonal entries of $X^T X$. 
Under this approximation, the reconstruction objective becomes
\begin{equation}
\sum_{i=0}^{d_{\mathrm{in}}-1} \sum_{j=0}^{d_{\mathrm{out}}-1} \mathrm{Imp}_i \left(W_{i,j}-\hat{W}_{i,j}\right)^2,
\end{equation}
where the importance score of the $i$-th input channel is defined as
\begin{equation}
\mathrm{Imp}_i = (X_{:,i})^T X_{:,i} = \|X_{:,i}\|_2^2.
\end{equation}
The resulting formulation naturally induces the following WMSE objective for NVFP4 quantization:
\begin{align} \label{eq:nvfp4_WMSE}
\mathcal{L}_{\mathrm{NVFP4}}^{\mathrm{WMSE}}
&(\mathbf{s}, S; W, \mathrm{Imp}) \notag \\
&=
\sum_{i=0}^{d_{\mathrm{in}}/16-1}
\sum_{j=0}^{d_{\mathrm{out}}-1}
\mathcal{L}\!\Big(
s_{i,j}\!\cdot\!S; \notag \\
&\quad
W_{16i:16(i+1),\,j},
\mathrm{Imp}_{16i:16(i+1)}
\Big).
\end{align}



Similarly, for activation quantization in linear layers, the importance score of each input channel can be derived from the squared norm of the corresponding input channel in the associated weight matrix. For query state and KV cache quantization in attention operators, the attention output is computed as $\mathrm{softmax}(QK^T)VW^O$, where the inputs are the query states $Q$, key cache $K$, value cache $V$, and output projection matrix $W^O$. Accordingly, the importance scores for query states $Q$, key cache $K$, and value cache $V$ are derived from the input-channel squared norms of $K$, $Q$, and $W^O$, respectively.

\subsection{ScaleSweep}

\subsubsection{Global Scale Selection}

As shown in Eq.~\eqref{eq:nvfp4_MSE} and Eq.~\eqref{eq:nvfp4_WMSE}, once the global scale is fixed, the optimization of different micro-blocks becomes independent. The two-level scaling mechanism in NVFP4 can be interpreted as a form of double quantization, where block scales are quantized into FP8 values and the global scale acts as the corresponding FP8 quantization scale. Prior analyses of FP8 quantization~\citep{fp8_ptq,fp8_format_dl} show that the quantization error introduced by FP8 is relatively small. Therefore, it is sufficient to select a global scale that preserves adequate dynamic range for block scale optimization. Following~\citet{four_over_six}, we set
\begin{equation}
S = \frac{\max_i |x_i|}{256\cdot 6}.
\end{equation}
As illustrated in Figure~\ref{fig:vary_S}, different choices of global scale produce only marginal differences in normalized MSE for both AbsMax and ScaleSweep.

\begin{figure}[t]
  \includegraphics[width=0.95\columnwidth]{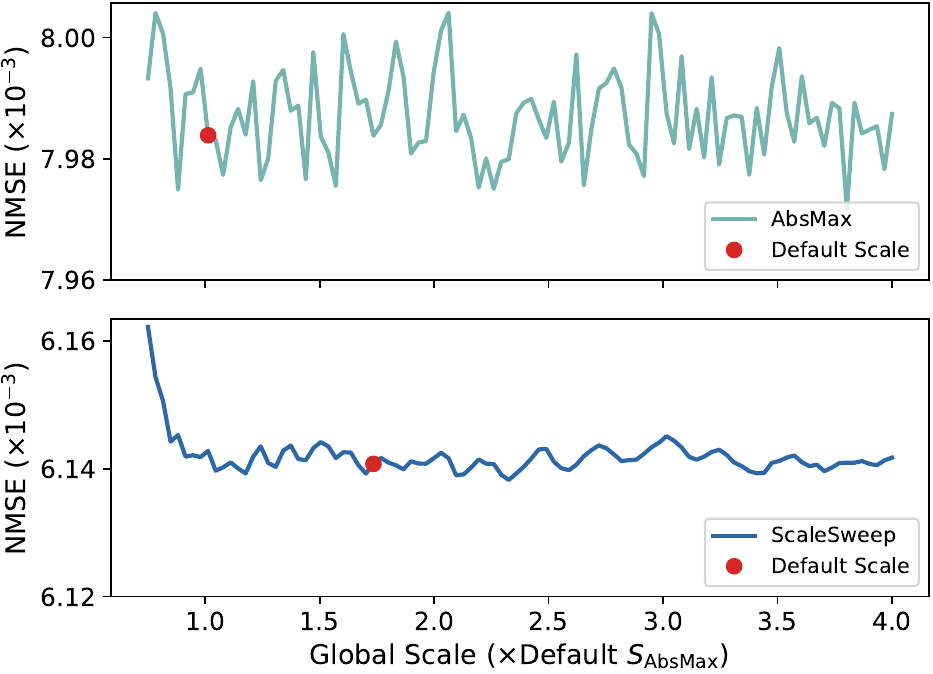}
  \caption{Normalized MSE between the original tensor and the quantized reconstructed tensor under different global scales for NVFP4, using AbsMax and ScaleSweep.}
  \label{fig:vary_S}
\end{figure}

\begin{table}[t]
\small
\centering
\begin{tabular}{ccc@{\qquad}ccc}
\toprule
$b$ & $\sup$ & $\Delta$ & $b$ & $\sup$ & $\Delta$ \\
\midrule
$000_2$ & $0\,111_2$ & $7$ & $100_2$ & $1\,011_2$ & $7$ \\
$001_2$ & $1\,000_2$ & $7$ & $101_2$ & $1\,100_2$ & $7$ \\
$010_2$ & $1\,001_2$ & $7$ & $110_2$ & $1\,100_2$ & $6$ \\
$011_2$ & $1\,010_2$ & $7$ & $111_2$ & $1\,101_2$ & $6$ \\
\bottomrule
\end{tabular}
\caption{
Bits-pattern upper bounds for $\alpha=\frac{12}{7}$ in E4M3. Here $b$ denotes the mantissa bits of $s_{\mathrm{base}}^{\mathrm{FP8}}$, $\mathrm{sup}$ denotes the carry bit concatenated with the mantissa bits of the largest possible $\left\lfloor \frac{12}{7}s_{\mathrm{base}} \right\rfloor_{\mathrm{E4M3}}$, and $\Delta=\mathrm{sup}-b$ denotes the upper bound on the bit difference relative to $b$.
}
\label{tab:every_mant_upper_bound}
\end{table}

\subsubsection{Block Scale Optimization}

For a single micro-block $\mathbf{x}=\{x_i\}_{i=0}^{n-1}$ with $n=16$, once the global scale $S$ is fixed, optimization can be equivalently performed on the normalized tensor $\mathbf{x}=\{x_i/S\}_{i=0}^{n-1}$, since the quantization loss differs only by a constant factor $S^2$. 

Since the block scale is represented in FP8 format, the number of positive finite representable values is limited to only $126$ candidates. Therefore, a natural strategy is therefore to sweep all candidate scales and select the one minimizing Eq.~\eqref{eq:weighted_quantization_loss}. However, sweeping over the entire FP8 scale space is inefficient. To improve optimization efficiency, it is desirable to derive tighter lower and upper bounds for the sweep range.
To this end, define the base scale as
\begin{equation}
s_{\mathrm{base}} = \frac{\max_i |x_i|}{6},
\quad
s_{\mathrm{base}}^{\mathrm{FP8}} = \left\lfloor s_{\mathrm{base}} \right\rfloor_{\mathrm{FP8}}.
\end{equation}

\paragraph{Block Scale Upper Bound}

\begin{lemma} \label{lem:upper_bound_of_scale}
If $s > \frac{\max_i |x_i|}{3.5} $, then
\begin{equation}
\mathcal{L}(s; \mathbf{x}, \mathbf{w}) \ge \mathcal{L}(s/2; \mathbf{x}, \mathbf{w}).
\end{equation}
\end{lemma}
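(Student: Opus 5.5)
The plan is to reduce the statement to a per-element comparison. Since $\mathbf{w}\in\mathbb{R}_{\ge 0}^n$, it suffices to show that for every index $i$
\begin{equation*}
\left(x_i - \left\lfloor x_i \right\rceil_{\mathrm{FP4}(s)}\right)^2 \ge \left(x_i - \left\lfloor x_i \right\rceil_{\mathrm{FP4}(s/2)}\right)^2 .
\end{equation*}
Multiplying by $w_i\ge 0$ and summing then gives $\mathcal{L}(s;\mathbf{x},\mathbf{w}) \ge \mathcal{L}(s/2;\mathbf{x},\mathbf{w})$. The grids are symmetric under negation, so I will assume $x_i \ge 0$.

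The key observation is a nesting of grids. The nonnegative part of $\mathcal{G}_{\mathrm{FP4}(s)}$ is $\{0, 0.5s, s, 1.5s, 2s, 3s, 4s, 6s\}$. The nonnegative part of $\mathcal{G}_{\mathrm{FP4}(s/2)}$ is $\{0, 0.25s, 0.5s, 0.75s, s, 1.5s, 2s, 3s\}$. Hence the truncated grid $T=\{0,0.5s,s,1.5s,2s,3s\}$ is contained in $\mathcal{G}_{\mathrm{FP4}(s/2)}$.

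Next I determine where the nearest point of $\mathcal{G}_{\mathrm{FP4}(s)}$ lies. By hypothesis $0 \le x_i \le \max_j|x_j| < 3.5s$. The midpoint between $3s$ and $4s$ is $3.5s$, so for such $x_i$ the point $3s$ is strictly closer than $4s$, and hence also closer than $6s$. Therefore the nearest grid point $\left\lfloor x_i \right\rceil_{\mathrm{FP4}(s)}$ lies in $T$. The strict inequality in the hypothesis is what rules out a tie with $4s$. Any tie-breaking among points of $T$ is harmless, because the argument only compares distances.

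To conclude, note that $\left\lfloor x_i \right\rceil_{\mathrm{FP4}(s)} \in T \subset \mathcal{G}_{\mathrm{FP4}(s/2)}$. The point $\left\lfloor x_i \right\rceil_{\mathrm{FP4}(s/2)}$ is a nearest point of the finer grid, so its distance to $x_i$ is at most that of any point in $\mathcal{G}_{\mathrm{FP4}(s/2)}$, in particular $\left\lfloor x_i \right\rceil_{\mathrm{FP4}(s)}$. Squaring and summing with the nonnegative weights gives the lemma.

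I do not expect a real obstacle here. The only step that needs care is the boundary check that the coarse grid never rounds to $4s$ or $6s$, and this is exactly what the threshold $3.5$ encodes.
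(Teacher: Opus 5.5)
Your proposal is correct and follows essentially the same argument as the paper: since $|x_i|<3.5s$, nearest rounding under $\mathrm{FP4}(s)$ lands in $\{0,\pm0.5s,\pm s,\pm1.5s,\pm2s,\pm3s\}\subset\mathcal{G}_{\mathrm{FP4}(s/2)}$. Hence the nearest point of the finer grid is at least as close to each $x_i$, and summing with $w_i\ge 0$ gives the claim. Your explicit check that $3s$ strictly beats $4s$ and $6s$, and that ties elsewhere are harmless, is slightly more careful than the paper's wording but the same argument in substance.
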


Based on Lemma~\ref{lem:upper_bound_of_scale}, when $s > \frac{12}{7}s_{\mathrm{base}}$, the scale $s$ cannot achieve a lower quantization loss than $s/2$. Therefore, the upper bound of ScaleSweep is set to $\frac{12}{7}s_{\mathrm{base}}$. Using the same technique, the corresponding upper bound for non-FP4 formats can be derived as $2\times$ the baseline scale.

\begin{lemma} \label{lem:upper_bound_of_bits_diff}
Under the EeMm floating-point format introduced in Appendix~\ref{appendix:fp_format}, for any $1 \leq \alpha \leq 2$ and any positive $x$, the following inequality holds:
\begin{align}
& I_\mathrm{EeMm}\!\left(
\lfloor \alpha x \rfloor_\mathrm{EeMm}
\right) - I_\mathrm{EeMm}\!\left(\lfloor x \rfloor_\mathrm{EeMm}\right) \notag \\
& \leq \big\lceil (1-\frac{1}{\alpha}) 2^{m+1} \big\rceil,
\end{align}
where $I_\mathrm{EeMm}(\cdot)$ denotes the integer interpretation of the floating-point bit pattern of the EeMm value.
\end{lemma}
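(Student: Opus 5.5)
The plan is to view the bit pattern $I_{\mathrm{EeMm}}$ as a piecewise-linear, monotone encoding of positive values. Within a binade $[2^E,2^{E+1})$, with $E$ at least the minimum normal exponent, the representable values are $2^E(1+k/2^m)$ for $k\in[2^m]$. Consecutive bit patterns differ by one ulp $2^{E-m}$, and moving from one binade to the next adds exactly $2^m$ to the integer. In the subnormal range the spacing is constant and equals the smallest normal ulp, so subnormals behave like an extension of the lowest binade with the same ulp. I will ignore saturation at the top of the format: if $\alpha x$ exceeds the largest finite value, $\lfloor\alpha x\rfloor$ is clipped to the maximum, and this only decreases the left-hand side.

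Let $y=\lfloor x\rfloor_{\mathrm{EeMm}}$ and $z=\lfloor \alpha x\rfloor_{\mathrm{EeMm}}$. First I reduce to a statement about $y$ alone. Since $x<y^{+}$, where $y^{+}$ is the successor of $y$ on the grid, we have $z\le \lfloor \alpha y^{+}\rfloor^{-}$, meaning $z$ is the largest grid value strictly below $\alpha y^{+}$. Write $y=2^E(1+k/2^m)$ and $N=2^m+k$, so that $y=N\cdot u$ with $u=2^{E-m}$ and $N\in[2^m,2^{m+1})$. The question becomes: how many grid points lie in the interval $(y,\alpha(N+1)u)$?

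I split into two cases according to whether $\alpha(N+1)u$ stays in the same binade, i.e.\ whether $\alpha(N+1)\le 2^{m+1}$.

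\emph{Case 1 (same binade).} The grid points are the multiples of $u$, so the difference is at most $\lceil \alpha(N+1)\rceil-1-N\le \lceil(\alpha-1)(N+1)\rceil$. Using $\alpha(N+1)\le 2^{m+1}$, this is at most $\lceil (1-1/\alpha)2^{m+1}\rceil$.

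\emph{Case 2 (next binade).} Here $\alpha(N+1)\le 2(N+1)\le 2^{m+2}$, so the target lies in the next binade, where the ulp is $2u$. The count of patterns is $(2^{m+1}-N)$ to reach $2^{m+1}u$, plus the number of multiples of $2u$ in $[2^{m+1}u,\alpha(N+1)u)$. That second count is $\lceil(\alpha(N+1)-2^{m+1})/2\rceil$, giving a total of
\[
2^{m+1}-N-1+\Big\lceil \tfrac{\alpha(N+1)-2^{m+1}}{2}\Big\rceil+1 .
\]
This expression is decreasing in $N$ because its slope is $\alpha/2-1\le 0$. It is therefore maximized at the smallest admissible $N$, namely $N+1\ge 2^{m+1}/\alpha$. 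Substituting $N+1=2^{m+1}/\alpha$ gives $2^{m+1}-2^{m+1}/\alpha=(1-1/\alpha)2^{m+1}$, and the integrality of $N$ then yields the ceiling. For the subnormal or lowest-binade start, the spacing is uniform, so Case 1's bound applies directly.

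The main obstacle is the boundary bookkeeping: the off-by-one between strict and non-strict floors, and the exact ceiling when $2^{m+1}/\alpha$ is not an integer. I expect to handle this by working with integers throughout, writing $N_0=\lceil 2^{m+1}/\alpha\rceil-1$ as the crossover index. I would then check both cases at $N_0$ and $N_0+1$ explicitly, and sanity-check the result against Table~\ref{tab:every_mant_upper_bound} ($m=3$, $\alpha=12/7$, bound $\lceil 5/12\cdot 16\rceil=7$).
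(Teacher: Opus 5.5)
Your strategy is sound, and its three regimes (subnormal start, landing in the same binade, crossing into the next binade) are the ones the paper uses. However, the Case 2 count you display is off by one, and as written it does not give the lemma. The grid point $2^{m+1}u$ is counted twice: once as the endpoint of the $2^{m+1}-N$ pattern steps from $y$, and again as the $j=0$ multiple of $2u$ in $[2^{m+1}u,\alpha(N+1)u)$. If that interval contains $K=\lceil(\alpha(N+1)-2^{m+1})/2\rceil\ge 1$ grid points, the largest lies $2^{m+1}-N-1+K$ patterns above $y$. Your expression instead simplifies to $2^{m+1}-N+K$. Taken literally, it only yields $\lceil(1-\tfrac1\alpha)2^{m+1}\rceil+1$; at $N+1=2^{m+1}/\alpha$ it evaluates to $(1-\tfrac1\alpha)2^{m+1}+1$. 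For $m=3$, $\alpha=\tfrac{12}{7}$, $N=9$ ($b=001_2$) it predicts $8$, whereas the true difference is $7$. Your substitution step silently reverts to the correct count.

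With the corrected count, use that $2^{m+1}-(N+1)$ is an integer to merge everything into one ceiling:
\[
D(N)=2^{m+1}-(N+1)+\Big\lceil\tfrac{\alpha(N+1)-2^{m+1}}{2}\Big\rceil=\Big\lceil 2^{m}-\big(1-\tfrac{\alpha}{2}\big)(N+1)\Big\rceil .
\]
For $N+1>2^{m+1}/\alpha$ and $\alpha\le 2$, the argument is at most $2^{m}-(1-\tfrac{\alpha}{2})\tfrac{2^{m+1}}{\alpha}=(1-\tfrac1\alpha)2^{m+1}$. This single-ceiling form is what justifies substituting the non-attained endpoint $N+1=2^{m+1}/\alpha$, where the inner ceiling jumps from $0$ to $1$.

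A related imprecision concerns the low end. A subnormal start always falls in Case 1: there $N+1\le 2^m$, so $\alpha(N+1)\le 2^{m+1}$, and the subnormals plus the first normal binade form one evenly spaced run of consecutive patterns. A start in the lowest normal binade, however, is an ordinary instance of Cases 1 and 2, not automatically Case 1.

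If you check entry by entry against Table~\ref{tab:every_mant_upper_bound}, expect $6$ rather than the table's $7$ for $b=101_2$. There $\alpha(N+1)=24$ exactly, and the strict floor excludes $24u$; the table's value is a valid but non-attained bound.

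Once repaired, your route differs from the paper's mainly in the crossing case. The paper keeps $x$ continuous through a real-valued mantissa $b$ and splits on $\alpha(2^m+b)\le 2^{m+1}$. It then argues informally that past the boundary, the pattern of $\lfloor\alpha x\rfloor$ advances once per $2/\alpha\ge 1$ units of $b$, while that of $\lfloor x\rfloor$ advances once per unit. Hence the difference never exceeds its boundary value $\lceil(1-\tfrac1\alpha)2^{m+1}\rceil$. Your reduction to the worst case $x\uparrow y^{+}$ replaces this step-function comparison with an explicit closed form that is monotone in $N$. That is easier to verify, and it also covers saturation at the top of the format, which the paper does not mention. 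The paper's version avoids the successor and strict-floor bookkeeping that produced your off-by-one.
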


When the sweep upper bound is set to $\frac{12}{7}s_{\mathrm{base}}$, the corresponding upper bound in the bit-pattern space becomes $I_{E4M3}\!\left(s_{\mathrm{base}}^{\mathrm{FP8}}\right)+7$. This upper bound is obtained from Lemma~\ref{lem:upper_bound_of_bits_diff} by setting $\alpha=\frac{12}{7}$, and the resulting upper bound for each mantissa field is summarized in Table~\ref{tab:every_mant_upper_bound}.

\begin{table}[t]
\small
\centering
\begin{tabular}{ccc@{\qquad}ccc}
\toprule
$b$ & $\inf$ & $\Delta$ & $b$ & $\inf$ & $\Delta$ \\
\midrule
$000_2$ & $1\,101_2$ & $3$ & $100_2$ & $0\,010_2$ & $2$ \\
$001_2$ & $1\,111_2$ & $2$ & $101_2$ & $0\,011_2$ & $2$ \\
$010_2$ & $0\,000_2$ & $2$ & $110_2$ & $0\,100_2$ & $2$ \\
$011_2$ & $0\,001_2$ & $2$ & $111_2$ & $0\,100_2$ & $3$ \\
\bottomrule
\end{tabular}
\caption{
Bit-pattern lower bounds for FP8 E4M3 micro-block scales when $n=16$. Here $b$ denotes the mantissa bits of $s_{\mathrm{base}}^{\mathrm{FP8}}$, $\mathrm{inf}$ denotes the borrow bit concatenated with the mantissa bits of the smallest possible $\left\lceil \frac{4}{5}s_{\mathrm{base}}^\mathrm{FP8} \right\rceil_{\mathrm{E4M3}}$, and $\Delta$ denotes the upper bound on the bit difference relative to $b$.
}
\label{tab:every_mant_lower_bound}
\end{table}


\paragraph{Block Scale Lower Bound}

\begin{lemma} \label{lem:lower_bound_of_MSE}
For block size $16$, namely $n=16$, when $\frac{11}{7}s_{\mathrm{base}}^{\mathrm{FP8}} \le 448$, it follows that
\begin{equation}
\arg\min_{s\in \mathcal{G}_\mathrm{FP8}} \mathcal{L}(s;\mathbf{x})
\ge \frac{4}{5}s_{\mathrm{base}}^{\mathrm{FP8}}.
\end{equation}
\end{lemma}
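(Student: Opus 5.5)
The plan is to argue by comparison. Fix a block with $M=\max_i|x_i|=6s_{\mathrm{base}}$, and let $s\in\mathcal{G}_{\mathrm{FP8}}$ be any candidate with $s<\frac{4}{5}s_{\mathrm{base}}^{\mathrm{FP8}}$. I will show there is a reference scale $t\in\mathcal{G}_{\mathrm{FP8}}$, with $s_{\mathrm{base}}^{\mathrm{FP8}}\le t\le \frac{11}{7}s_{\mathrm{base}}^{\mathrm{FP8}}$, such that $\mathcal{L}(t;\mathbf{x})\le\mathcal{L}(s;\mathbf{x})$. The hypothesis $\frac{11}{7}s_{\mathrm{base}}^{\mathrm{FP8}}\le 448$ is used only to guarantee that every reference scale in this window is a finite E4M3 value. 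It follows that the minimizer can always be taken $\ge\frac45 s_{\mathrm{base}}^{\mathrm{FP8}}$. Writing $f_u(x)=(x-\lfloor x\rceil_{\mathrm{FP4}(u)})^2$, the loss is $\sum_i f_u(x_i)$, and by symmetry I may take all $x_i\ge 0$.

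\textbf{Step 1: the maximal element.} Since $s<\frac45 s_{\mathrm{base}}^{\mathrm{FP8}}\le \frac45 s_{\mathrm{base}}$, we have $M/s>7.5$, so the maximal element is clipped at $6s$. This gives
\[
f_s(M)\ge (M-6s)^2>(1.2\,s_{\mathrm{base}})^2 .
\]
More precisely, $f_s(M)=(6s_{\mathrm{base}}-6s)^2$ grows quadratically as $s$ decreases. This clipping penalty is the only source of advantage for the reference scale, so it must pay for everything else.

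\textbf{Step 2: the other $n-1=15$ elements.} The crude bound $f_t(x)\le t^2$ (half the widest FP4 gap, between $4$ and $6$) is far too weak: it would require the clipping error to exceed $15t^2$. The comparison therefore has to be elementwise and relative. I would define
\[
D(s,t)=\sup_{0\le x\le M}\big(f_t(x)-f_s(x)\big),
\]
and aim to prove
\[
f_s(M)-f_t(M)\ \ge\ (n-1)\,D(s,t)
\]
for a well-chosen $t$. Both $f_s$ and $f_t$ are piecewise quadratic in $x$ with breakpoints at the scaled FP4 grid points and midpoints, so $D(s,t)$ is a maximum of finitely many quadratics over intervals. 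When $s$ is much smaller than $t$, the grids interleave, and $D$ is controlled by the gap structure of FP4. Near zero the grids are fine (spacing $0.5u$), and the coarse gaps only occur above $2u$. This is where having several candidate $t$'s helps: the freedom to pick $t$ among the FP8 values in $[s_{\mathrm{base}}^{\mathrm{FP8}},\frac{11}{7}s_{\mathrm{base}}^{\mathrm{FP8}}]$ lets me choose the reference whose grid best aligns with that of $s$. Intuitively, the choice depends on whether $s/t$ is close to $\frac12$, $\frac23$, $\frac34$, and so on, where the two grids nearly coincide and $D$ is small.

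\textbf{Step 3: reduction to a finite check.} The whole inequality is homogeneous of degree $2$ under joint rescaling of $\mathbf{x}$, $s$ and $t$ by powers of $2$, and E4M3 normal values are invariant under such rescaling. So, away from the subnormal range (which I would handle separately, or note that it only shrinks the candidate set), the verification reduces to a finite enumeration. The cases are the $8$ mantissa patterns of $s_{\mathrm{base}}^{\mathrm{FP8}}$ (as in Table~\ref{tab:every_mant_lower_bound}), the position of $s_{\mathrm{base}}$ inside $[s_{\mathrm{base}}^{\mathrm{FP8}},\lceil s_{\mathrm{base}}\rceil_{\mathrm{FP8}})$, and the finitely many relative bit offsets of $s$ below $\frac45 s_{\mathrm{base}}^{\mathrm{FP8}}$. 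Scales $s$ below half the reference range are easy, since then $M/s>12$ and the clipping error alone is $\ge (M/2)^2$, which dominates.

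For each remaining case, the inequality becomes an explicit one-dimensional inequality in the continuous parameter $s_{\mathrm{base}}$ over an interval, with $D$ computed exactly from piecewise quadratics. I would verify these cases by computer-assisted exact (rational) arithmetic; this matches the paper's stated use of computer-assisted analysis.

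\textbf{Main obstacle.} I expect Step 2 to be the hardest part: bounding $D(s,t)$ tightly enough that $15D$ is beaten by the clipping gain. The uniform worst case over $x$ may be too pessimistic, since the $15$ elements cannot simultaneously sit at every worst-case point if the maximum is at $M$. The first remedy I would try is taking the minimum over the several admissible $t$ before the supremum over $x$, that is, bounding $\min_t\sum_i(f_t(x_i)-f_s(x_i))$ rather than using one fixed $t$. If that still fails in some mantissa case, I would tighten using $x_i\le M$ more carefully, restricting the supremum in $D$ to the range actually reachable below $M$.
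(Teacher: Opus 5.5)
There is a genuine gap, at exactly the obstacle you flagged: your sufficient test cannot be passed by any single competitor for the ratios just below $\frac45$.

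\textbf{Why the single-competitor test fails.} Work in units of $s_0=s_{\mathrm{base}}^{\mathrm{FP8}}$, with $r=s/s_0$ and $d_r(y)=\min_{g\in\mathcal{G}}(y-rg)^2$. Let $A$ be the change $d_t-d_r$ on the maximal coordinate and $B=D$ the worst-case change on each of the other $15$ coordinates. Your test is then $A+15B\le 0$.

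For $r\le\frac34$ a single competitor works. Take $t=2s$: the $2s$-grid contains every point of the $s$-grid except $\frac12 s$ and $\frac32 s$, so $B=\frac{r^2}{4}$. The clipped maximum gives $A\le 28r^2-24r$. Hence $A+15B\le -r(24-\frac{127}{4}r)<0$. This is the paper's first step. It also replaces your heuristic for $s<\frac12 s_0$, since the bare clipping error $9s_{\mathrm{base}}^2$ does not by itself beat the crude budget $15t^2$.

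The test fails for the three FP8 ratios left in $(\frac34,\frac45)$, namely $\frac79$, $\frac{10}{13}$ and $\frac{11}{14}$.
\begin{itemize}
\item For $t=2s$, both bounds above are attained at $s_{\mathrm{base}}=s_0$. So $A+15B=-r(24-\frac{127}{4}r)>0$ as soon as $r>\frac{96}{127}$, which holds for all three ratios.
\item Every other competitor is worse, because the coarse gap $[4t,6t]$ of its grid straddles points where $d_r$ is small. For $r=\frac{11}{14}$ and $s_{\mathrm{base}}=s_0$, each admissible ratio $t/s_0\in\{\frac67,\frac{13}{14},1,\frac{15}{14},\frac87,\frac97,\frac{10}{7},\frac{11}{7}\}$ fails. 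For example, $t/s_0=\frac{15}{14}$ gives $d_t-d_r=\frac{36}{49}$ at $y=\frac{75}{14}$, so $15B>11$ against $|A|<\frac32$.
\item Restricting the supremum in $D$ to $[0,M]$ does not help, because all these bad points lie below $6\le M/s_0$.
\end{itemize}

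\textbf{The missing idea: convex-combination certificates.} Your fallback of taking $\min_t$ before the supremum points the right way. But a minimum over competitors does not split over coordinates, so the one-dimensional finite check of your Step 3 is no longer available. Replace the minimum by a convex combination. Since $\min_j\bar{\mathcal{L}}(u_j;\mathbf{y})\le\sum_j\lambda_j\bar{\mathcal{L}}(u_j;\mathbf{y})$, it suffices that $\sum_i\Delta(y_i)\le 0$, where $\Delta(y)=\sum_j\lambda_j d_{u_j}(y)-d_r(y)$. This is separable again, so the test is again $A+15B\le 0$. Because $\sum_j\lambda_j=1$, the $y^2$ terms cancel and $\Delta$ is piecewise affine. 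So $A$ and $B$ can be read off at finitely many breakpoints, with the weights $\lambda$ found by a small LP.

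The paper's certificates use $\{\frac56,\frac{10}{9},\frac{14}{9}\}$, $\{\frac{11}{13},\frac{14}{13},\frac{20}{13}\}$ and $\{\frac67,\frac{15}{14},\frac{11}{7}\}$, each with weights $(\frac14,\frac14,\frac12)$. The margins are razor-thin, e.g.\ $-\frac{1}{196}$ for $r=\frac{11}{14}$.

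\textbf{Drop the window $t\ge s_0$.} Each of these certificates contains a ratio below $1$, and $2s$ is itself below $s_0$ when $s<\frac12 s_0$. Only $t\le\frac{11}{7}s_0$ is needed, and that is exactly what the hypothesis $\frac{11}{7}s_0\le 448$ guarantees.
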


For the MSE objective, based on Lemma~\ref{lem:lower_bound_of_MSE}, the lower bound of ScaleSweep can be set to $\frac{4}{5}s_{\mathrm{base}}^{\mathrm{FP8}}$. Accordingly, the corresponding lower bound in the bit-pattern space is  $I_{\mathrm{E4M3}}\!\left(s_{\mathrm{base}}^{\mathrm{FP8}}\right)-3$, and the resulting lower bound for each mantissa field is summarized in Table~\ref{tab:every_mant_lower_bound}.

For the WMSE objective, the optimal scale can become arbitrarily small if the weight associated with $\max_i |x_i|$ is sufficiently small. Therefore, ScaleSweep empirically sets the lower bound to $\frac{1}{2}s_{\mathrm{base}}$, and the corresponding lower bound in the bit-pattern space becomes $I_{\mathrm{E4M3}}\!\left(s_{\mathrm{base}}^{\mathrm{FP8}}\right)-8$.

In summary, for the MSE objective in Eq.~\eqref{eq:mse_quantization_loss}, the bit-pattern sweep space uses $I_{\mathrm{E4M3}}\!\left(s_{\mathrm{base}}^{\mathrm{FP8}}\right)$ as the zero point with offset range $[-3,7]$. For the WMSE objective in Eq.~\eqref{eq:weighted_quantization_loss}, the bit-pattern sweep space also uses $I_{\mathrm{E4M3}}\!\left(s_{\mathrm{base}}^{\mathrm{FP8}}\right)$ as the zero point, but with offset range $[-8,7]$.
The proofs of all lemmas are provided in Appendix~\ref{appendix:proofs}.

\section{Experiment}

\begin{table*}[th]
\small
\renewcommand{\arraystretch}{0.95}
\centering
\begin{tabular}{ccc@{\hskip 5pt}c@{\hskip 4pt}cccccc@{\hskip 3pt}c}
\toprule
\textbf{Setting} & \textbf{Method} & \textbf{Init Method} & \textbf{MMLU Pro} & \textbf{GSM8k} & \textbf{IFEval} & \textbf{HellaS} & \textbf{WinoG} & \textbf{Avg} & \textbf{Recovery (\%)} \\
\midrule
\multicolumn{10}{c}{\textbf{Llama-3.1-8B-Instruct}} \\
\midrule- & BF16 & - & 46.59 & 84.69 & 73.94 & 79.53 & 73.80 & 71.71 & 100 \\
\cmidrule(l){1-10}
\multirow[c]{8}{*}{WA} & \multirow[c]{4}{*}{RTN} & AbsMax & 41.48 & 78.32 & 69.69 & 77.91 & 72.53 & 67.99 & 94.81 \\
 &  & 4/6 & 42.29 & 80.59 & 67.28 & \textbf{78.52} & 73.01 & 68.34 & 95.30 \\
 &  & \cellcolor{blue!15}ScaleSweep$_\mathrm{MSE}$ & \cellcolor{blue!15}42.77 & \cellcolor{blue!15}\textbf{82.03} & \cellcolor{blue!15}\textbf{73.20} & \cellcolor{blue!15}78.17 & \cellcolor{blue!15}\textbf{73.09} & \cellcolor{blue!15}\textbf{69.85} & \cellcolor{blue!15}\textbf{97.41} \\
 &  & \cellcolor{blue!15}ScaleSweep & \cellcolor{blue!15}\textbf{43.09} & \cellcolor{blue!15}79.68 & \cellcolor{blue!15}70.79 & \cellcolor{blue!15}78.27 & \cellcolor{blue!15}72.45 & \cellcolor{blue!15}68.86 & \cellcolor{blue!15}96.03 \\
\cmidrule(l){2-10}
 & \multirow[c]{4}{*}{GPTQ} & AbsMax & 41.94 & \textbf{81.58} & 69.13 & 77.94 & 73.32 & 68.78 & 95.92 \\
 &  & 4/6 & 42.46 & 80.67 & 70.98 & 78.01 & 73.24 & 69.07 & 96.33 \\
 &  & \cellcolor{blue!15}ScaleSweep$_\mathrm{MSE}$ & \cellcolor{blue!15}\textbf{43.03} & \cellcolor{blue!15}81.43 & \cellcolor{blue!15}69.13 & \cellcolor{blue!15}78.19 & \cellcolor{blue!15}\textbf{74.11} & \cellcolor{blue!15}69.18 & \cellcolor{blue!15}96.47 \\
 &  & \cellcolor{blue!15}ScaleSweep & \cellcolor{blue!15}42.70 & \cellcolor{blue!15}81.50 & \cellcolor{blue!15}\textbf{74.49} & \cellcolor{blue!15}\textbf{78.35} & \cellcolor{blue!15}72.93 & \cellcolor{blue!15}\textbf{69.99} & \cellcolor{blue!15}\textbf{97.61} \\
\cmidrule(l){1-10}
\multirow[c]{8}{*}{WAKV} & \multirow[c]{4}{*}{RTN} & AbsMax & 38.21 & 76.95 & 68.95 & 77.31 & 71.35 & 66.55 & 92.81 \\
 &  & 4/6 & 39.29 & 78.70 & 66.54 & 77.58 & 72.14 & 66.85 & 93.23 \\
 &  & \cellcolor{blue!15}ScaleSweep$_\mathrm{MSE}$ & \cellcolor{blue!15}41.06 & \cellcolor{blue!15}78.85 & \cellcolor{blue!15}\textbf{70.24} & \cellcolor{blue!15}78.09 & \cellcolor{blue!15}\textbf{72.85} & \cellcolor{blue!15}68.22 & \cellcolor{blue!15}95.13 \\
 &  & \cellcolor{blue!15}ScaleSweep & \cellcolor{blue!15}\textbf{41.29} & \cellcolor{blue!15}\textbf{79.91} & \cellcolor{blue!15}69.69 & \cellcolor{blue!15}\textbf{78.12} & \cellcolor{blue!15}72.53 & \cellcolor{blue!15}\textbf{68.31} & \cellcolor{blue!15}\textbf{95.26} \\
\cmidrule(l){2-10}
 & \multirow[c]{4}{*}{GPTQ} & AbsMax & 38.40 & 77.94 & 70.61 & \textbf{77.95} & 71.35 & 67.25 & 93.78 \\
 &  & 4/6 & 40.26 & \textbf{80.06} & 69.13 & 77.78 & 72.22 & 67.89 & 94.68 \\
 &  & \cellcolor{blue!15}ScaleSweep$_\mathrm{MSE}$ & \cellcolor{blue!15}40.86 & \cellcolor{blue!15}79.45 & \cellcolor{blue!15}70.61 & \cellcolor{blue!15}77.71 & \cellcolor{blue!15}72.22 & \cellcolor{blue!15}68.17 & \cellcolor{blue!15}95.07 \\
 &  & \cellcolor{blue!15}ScaleSweep & \cellcolor{blue!15}\textbf{41.26} & \cellcolor{blue!15}79.98 & \cellcolor{blue!15}\textbf{71.90} & \cellcolor{blue!15}77.54 & \cellcolor{blue!15}\textbf{72.77} & \cellcolor{blue!15}\textbf{68.69} & \cellcolor{blue!15}\textbf{95.80} \\
\cmidrule(l){1-10}
\multirow[c]{8}{*}{WAKVQ} & \multirow[c]{4}{*}{RTN} & AbsMax & 34.04 & 69.90 & \textbf{70.61} & 76.99 & 70.88 & 64.48 & 89.93 \\
 &  & 4/6 & 36.69 & 74.22 & 68.02 & \textbf{77.43} & \textbf{72.14} & 65.70 & 91.62 \\
 &  & \cellcolor{blue!15}ScaleSweep$_\mathrm{MSE}$ & \cellcolor{blue!15}37.86 & \cellcolor{blue!15}74.98 & \cellcolor{blue!15}69.32 & \cellcolor{blue!15}77.38 & \cellcolor{blue!15}71.59 & \cellcolor{blue!15}66.23 & \cellcolor{blue!15}92.35 \\
 &  & \cellcolor{blue!15}ScaleSweep & \cellcolor{blue!15}\textbf{39.72} & \cellcolor{blue!15}\textbf{77.33} & \cellcolor{blue!15}69.13 & \cellcolor{blue!15}\textbf{77.43} & \cellcolor{blue!15}71.27 & \cellcolor{blue!15}\textbf{66.98} & \cellcolor{blue!15}\textbf{93.40} \\
\cmidrule(l){2-10}
 & \multirow[c]{4}{*}{GPTQ} & AbsMax & 34.53 & 73.16 & 68.21 & 76.92 & 71.51 & 64.87 & 90.46 \\
 &  & 4/6 & 36.45 & 76.57 & 67.65 & 77.62 & \textbf{72.14} & 66.09 & 92.16 \\
 &  & \cellcolor{blue!15}ScaleSweep$_\mathrm{MSE}$ & \cellcolor{blue!15}38.60 & \cellcolor{blue!15}\textbf{78.24} & \cellcolor{blue!15}68.58 & \cellcolor{blue!15}77.42 & \cellcolor{blue!15}72.06 & \cellcolor{blue!15}66.98 & \cellcolor{blue!15}93.41 \\
 &  & \cellcolor{blue!15}ScaleSweep & \cellcolor{blue!15}\textbf{39.67} & \cellcolor{blue!15}77.33 & \cellcolor{blue!15}\textbf{70.24} & \cellcolor{blue!15}\textbf{77.67} & \cellcolor{blue!15}71.82 & \cellcolor{blue!15}\textbf{67.35} & \cellcolor{blue!15}\textbf{93.92} \\
\bottomrule
\end{tabular}
\caption{Comparison results of ScaleSweep and baseline initialization methods under RTN and GPTQ across different settings on Llama-3.1-8B-Instruct.} \label{tab:main_result}
\end{table*}

\subsection{Experiment Setup}

\paragraph{Models and metrics.}
We evaluate four instruction-tuned large language models: Llama-3.1-8B-Instruct, Llama-3.2-3B-Instruct~\citep{llama31,llama32}, Qwen3-8B, and Qwen3-4B~\citep{qwen3}. All Qwen3 evaluations are conducted in non-thinking mode. We report results on five benchmarks to cover different aspects of model capability: MMLU Pro for world knowledge and reasoning with 5-shot prompting~\citep{mmlu_pro}, GSM8K for mathematical reasoning with 8-shot chain-of-thought prompting~\citep{gsm8k}, IFEval for instruction following in the 0-shot setting~\citep{ifeval}, and HellaSwag (HellaS)~\citep{hellaswag} together with WinoGrande (WinoG)~\citep{winogrande} for commonsense reasoning and language understanding in the 0-shot setting. \textbf{Avg} denotes the average score across the five benchmarks, and \textbf{Recovery (\%)} denotes the performance recovery relative to the BF16 baseline.

\paragraph{Quantization settings.}
We evaluate three quantization configurations. 
The first configuration, denoted as WA, applies NVFP4 quantization to both weights and activations. 
The second configuration, denoted as WAKV, further extends NVFP4 quantization to the KV cache. 
The third configuration, denoted as WAKVQ, additionally quantizes the query states using NVFP4. 
Together, these configurations enable a progressive evaluation of increasingly comprehensive NVFP4 quantization across model components.
All experiments are conducted with simulation quantization on NVIDIA L40 GPUs.

\paragraph{Baselines and initialization methods.}
We evaluate ScaleSweep under RTN and GPTQ~\citep{gptq} post-training quantization frameworks, where static activation reordering is applied for GPTQ. Calibration uses 128 sequences of length 2048 sampled from RedPajama~\citep{redpajama}. For scale initialization and optimization baselines, we compare against AbsMax and 4/6~\citep{four_over_six}. Additional implementation details are provided in Appendix~\ref{appendix:implementation_details}.

\begin{figure}[t]
  \includegraphics[width=0.95\columnwidth]{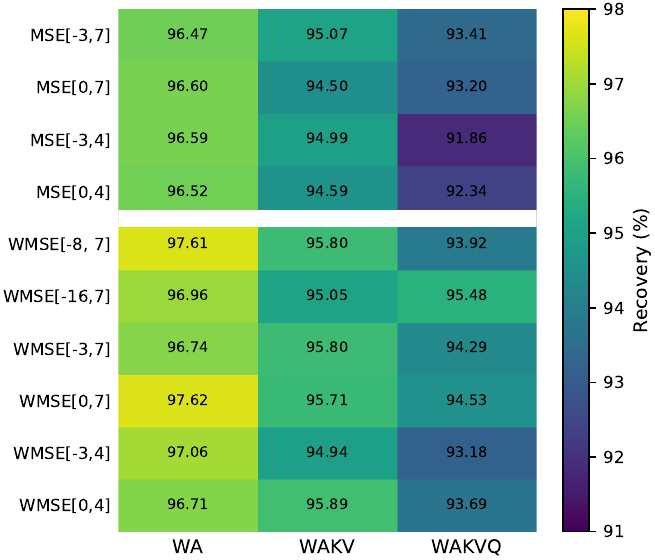}
  \caption{Heatmaps of ScaleSweep under different lower and upper bounds in the bit-pattern space for the MSE and WMSE objectives. $\mathrm{MSE}[l,r]$ and $\mathrm{WMSE}[l,r]$ denote ScaleSweep on Llama-3.1-8B-Instruct using the MSE and WMSE objectives, respectively, with bit-pattern sweep bounds set to $[l,r]$.}
  \label{fig:scale_sweep_range}
\end{figure}

\subsection{Results}

As shown in Table~\ref{tab:main_result} and Table~\ref{tab:main_result_llama_3_2_3B},\ref{tab:main_result_qwen_3_4B_8B} in Appendix~\ref{appendix:full_result}, ScaleSweep and ScaleSweep$_\mathrm{MSE}$ achieve competitive performance across different NVFP4 quantization settings and, in most cases, further narrow the gap to the BF16 baseline compared with AbsMax and 4/6 initialization. Moreover, ScaleSweep consistently achieves further improvements over ScaleSweep$_\mathrm{MSE}$ in most settings.
Under the WA setting, the baseline initialization methods already achieve strong performance, while ScaleSweep still provides additional improvements for both RTN and GPTQ quantization. In particular, on Qwen3-8B, ScaleSweep further improves the recovery rate to $99.50\%$.
Under the WAKV setting, ScaleSweep continues to provide incremental gains on top of already competitive baselines, where the recovery rate is improved by around $1\%$ to $2\%$ across different models and quantization backends in most cases.
Under the more challenging WAKVQ setting, ScaleSweep demonstrates stronger robustness as the quantization difficulty increases, while maintaining high recovery rates across different models. In particular, Llama-3.2-3B-Instruct maintains at least $93\%$ recovery rate under both RTN and GPTQ quantization, while Qwen3-4B consistently achieves at least $94.5\%$. These results demonstrate that ScaleSweep remains effective under increasingly aggressive quantization settings.

\paragraph{Scale Sweep Range}
We further evaluate the impact of different lower and upper bounds in ScaleSweep under both MSE and WMSE objectives. As shown in Figure~\ref{fig:scale_sweep_range}, a narrower sweep range does not necessarily lead to worse performance and can occasionally achieve slightly better results. Noticeable degradation appears only under more aggressive quantization settings. This phenomenon suggests that the local surrogate metric used in ScaleSweep cannot perfectly characterize the final downstream performance, which remains an important direction for future research.

\begin{table}[ht]
\renewcommand{\arraystretch}{0.95}
\small
\centering
\begin{tabular}{ccccc}
\toprule
\textbf{Setting} & \textbf{Method} & \textbf{Dataset} & \textbf{GSM8k} & \textbf{Avg}  \\
\midrule
\multirow[c]{2}{*}{WA} & \multirow[c]{2}{*}{GPTQ} & RedPajama & 81.50 & 69.99 \\
 &  & GSM8k & \textbf{83.47} & \textbf{70.78} \\
\midrule
\multirow[c]{2}{*}{WAKV} & \multirow[c]{2}{*}{GPTQ} & RedPajama & 79.98 & 68.69 \\
 &  & GSM8k & \textbf{81.88} & \textbf{68.84} \\
\midrule
\multirow[c]{2}{*}{WAKVQ} & \multirow[c]{2}{*}{GPTQ} & RedPajama & 77.33 & 67.35 \\
 &  & GSM8k & \textbf{80.21} & \textbf{67.79} \\
\bottomrule
\end{tabular}
\caption{Comparison results of ScaleSweep under different settings on Llama-3.1-8B-Instruct using RedPajama and the GSM8K training split as calibration datasets.} \label{tab:calib_GSM8k}
\end{table}

\paragraph{Calibration Data}
We further assess the impact of using the GSM8K training split rather than RedPajama as the calibration dataset for GPTQ with ScaleSweep. As reported in Table~\ref{tab:calib_GSM8k}, the performance advantage on GSM8K becomes more pronounced under increasingly aggressive quantization, yielding an improvement of approximately $3$ points in the WAKVQ setting. Across the five evaluated benchmarks, the average performance also exhibits a modest improvement, primarily attributable to gains on GSM8K, while the results on the remaining four benchmarks remain largely consistent.

\begin{figure*}[ht]
  \centering
  \includegraphics[width=0.9\textwidth]{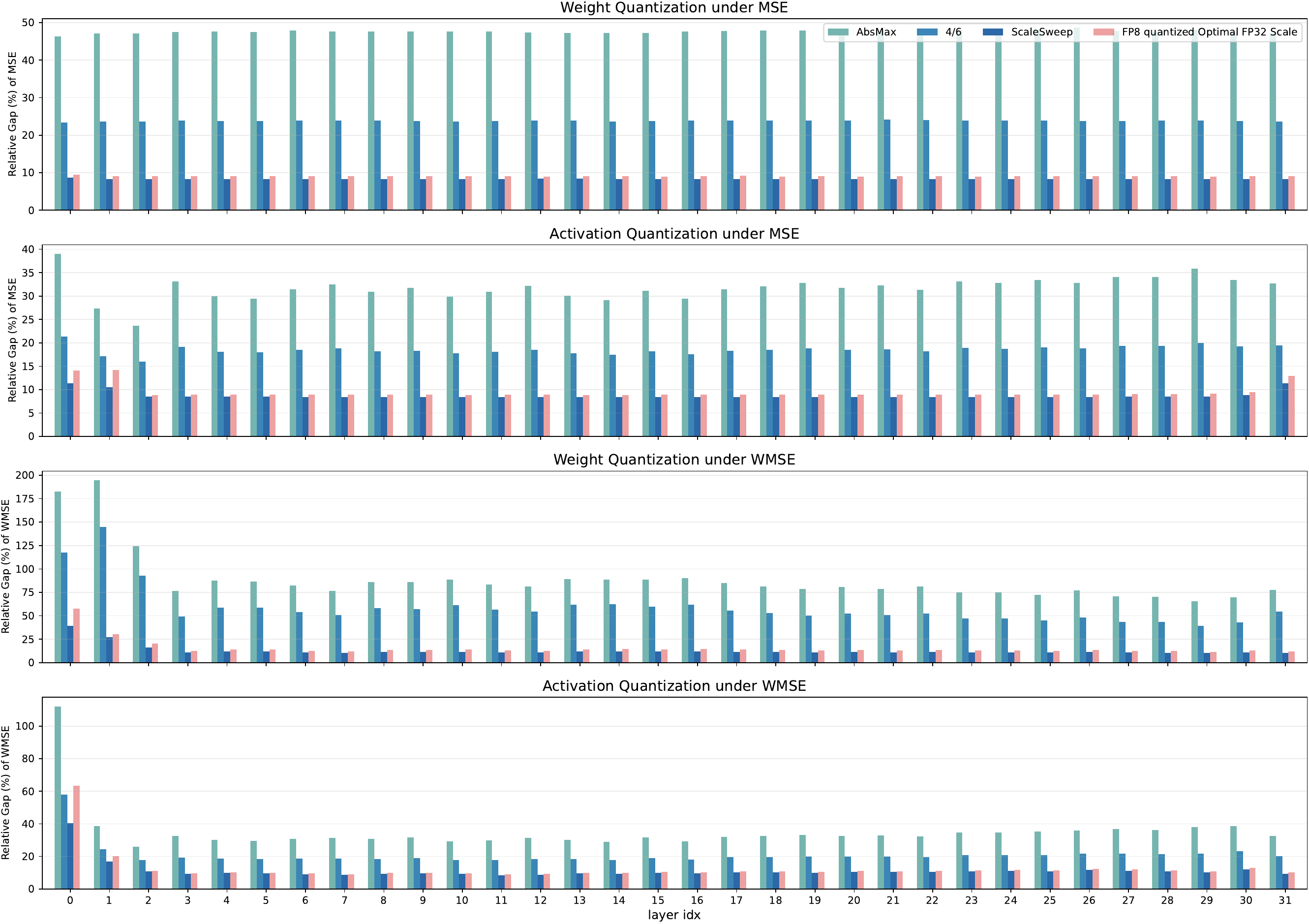}
  \caption{Performance gap (\%) of ScaleSweep, initialization baselines, and FP8-quantized optimal FP32 scales relative to the optimal FP32 block scale in terms of MSE and WMSE for weight quantization and activation quantization on Llama-3.1-8B-Instruct. For both weight and activation quantization, results are averaged over all quantized tensors within each Transformer block.}
  \label{fig:mse_and_wmse_on_each_layer_WA}
\end{figure*}

\paragraph{Quantization Error Analysis} 

We evaluate the quantization error of different initialization methods against the optimal FP32 block scales across layers. As shown in Figure~\ref{fig:mse_and_wmse_on_each_layer_WA} and Figure~\ref{fig:mse_and_wmse_on_each_layer_QKV} in Appendix~\ref{appendix:full_result}, ScaleSweep achieves substantial improvements over AbsMax and 4/6, and in most cases even outperforms the FP8-quantized optimal FP32 scale. For the MSE objective, ScaleSweep achieves a relative gap below $10\%$ in nearly all cases. For the WMSE objective, the relative gap also remains below $10\%$ in nearly all cases for weight quantization, activation quantization, and value cache quantization.

\begin{figure}[t]
  \includegraphics[width=0.95\columnwidth]{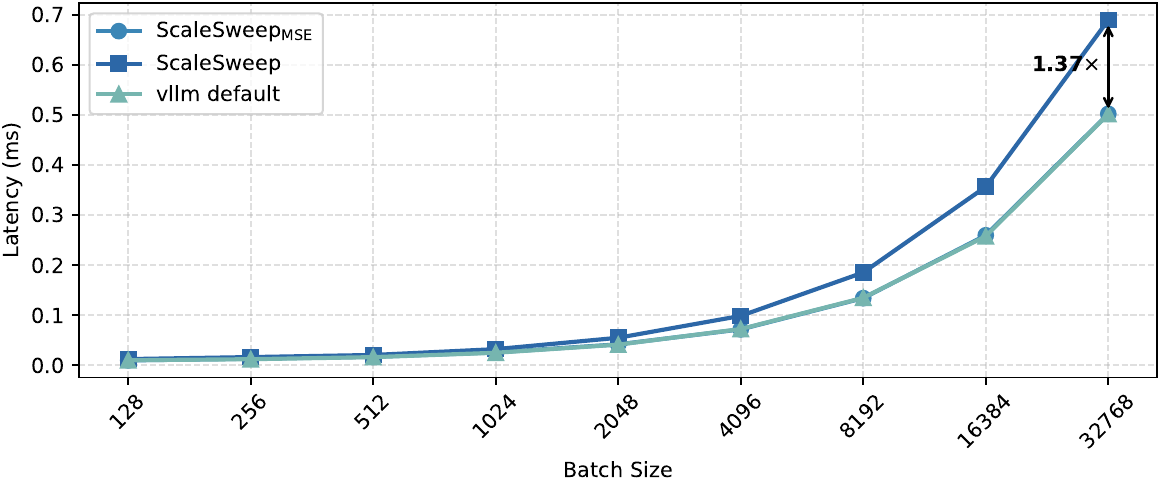}
  \caption{NVFP4 quantization operator latency comparison across different batch sizes with hidden dimension 8192 on NVIDIA RTX PRO 6000 Blackwell GPUs.}
  \label{fig:kernel_latency}
\end{figure}

\paragraph{Overhead Analysis} 
We further implemented preliminary Triton kernels for ScaleSweep and ScaleSweep$_\mathrm{MSE}$, and evaluated them under different batch sizes with hidden dimension 8192 against the default NVFP4 quantization operator in vLLM on NVIDIA RTX PRO 6000 Blackwell GPUs. The comparison results are presented in Figure~\ref{fig:kernel_latency} and Table~\ref{tab:kernel_latency} in Appendix~\ref{appendix:full_result}. ScaleSweep$_\mathrm{MSE}$ achieves nearly identical latency to the default vLLM operator, while ScaleSweep incurs only approximately $1.37\times$ higher latency, indicating negligible overhead during practical inference. Since the operators are memory-bound, further optimization through kernel fusion remains feasible.

\section{Conclusion}

In this paper, we propose ScaleSweep, a scale sweep method for FP8 block scales in NVFP4 quantization that can be seamlessly integrated with existing PTQ methods such as RTN and GPTQ. To the best of our knowledge, our study is the first to derive lower and upper bounds for block scale optimization under both the MSE and WMSE objectives in NVFP4 quantization. Based on these bounds, ScaleSweep restricts the sweep range to a theoretically justified local neighborhood in the FP8 bit-pattern space. Experimental results demonstrate that ScaleSweep consistently outperforms existing initialization baselines in weight-activation quantization, and further improves performance under more aggressive settings involving KV cache quantization and query state quantization. Further experiments demonstrate that ScaleSweep introduces negligible overhead during inference. 


\section*{Limitations}

In this work, we focus on scale optimization for NVFP4 quantization. Accordingly, the evaluation is restricted to NVFP4 settings and may not directly generalize to other low-precision formats or quantization schemes with different scaling structures. In addition, the effectiveness of quantization can vary depending on model characteristics and calibration data distributions, and the behavior of ScaleSweep in broader application scenarios remains to be further investigated.

\section*{Ethical considerations}

This work does not present ethical concerns. All large language models and datasets used in this study are publicly available and utilized strictly for research purposes in accordance with their licenses. The datasets are carefully examined to avoid personally identifiable information and offensive content. AI assistants are used only for polishing descriptive text and generating plotting code.


\bibliography{custom}

\appendix

\section{Floating-Point Representation} \label{appendix:fp_format}

EeMm denotes a floating-point format consisting of one sign bit, $e$ exponent bits, and $m$ mantissa bits. Unless otherwise specified, the exponent bias is defined as
\begin{equation}
B = 2^{e-1} - 1 .
\end{equation}

Only finite floating-point numbers are considered throughout this work. Therefore, the special encodings corresponding to infinity and NaN are excluded.

For a floating-point number with sign bit $c$, exponent field $a$, and mantissa field $b$, the represented finite value is denoted by $v(c,a,b)$ and defined as
\begin{equation}
v(c,a,b)=
\begin{cases}
(-1)^c 2^{a-B} \left(1+\dfrac{b}{2^m}\right), & a \neq 0, \\\\
(-1)^c 2^{1-B}\dfrac{b}{2^m}, & a = 0,
\end{cases}
\end{equation}
where $c \in \{0, 1\}$, $a \in [2^e]$, and $b \in [2^m]$.

The set of all representable finite values in the EeMm format is denoted by
\begin{align}
 \mathcal{G}_\mathrm{EeMm} = \{ v(c,a,b) \mid & \ c \in \{0,1\},\ a \in [2^e], \notag \\ 
&\ b \in [2^m] \}.
\end{align}

The corresponding integer interpretation of the bit pattern of the value is defined as
\begin{equation}
I_\mathrm{EeMm}(v(c,a,b)) = c \cdot 2^{e+m} + a \cdot 2^m + b .
\end{equation}

\section{Proofs} \label{appendix:proofs}

\subsection{Proof of Lemma~\ref{lem:upper_bound_of_scale}}

\begin{proof}
Since $s>\max_i |x_i|/3.5$, every element satisfies $|x_i|<3.5s$. Under nearest-point FP4 quantization with scale $s$, no element can therefore be rounded to $\pm4s$, whose nearest decision boundary begins at $3.5s$. Consequently, every activated quantized value under $\mathrm{FP4}(s)$ belongs to
\begin{equation}
\{0,\pm0.5s,\pm s,\pm1.5s,\pm2s,\pm3s\}.
\end{equation}
This set is entirely contained in $\mathcal{G}_{\mathrm{FP4}(s/2)}$. Hence, for each $x_i$, the quantized value $\left\lfloor x_i \right\rceil_{\mathrm{FP4}(s)}$ is also feasible under $\mathrm{FP4}(s/2)$. By the optimality of nearest-point quantization,
\begin{equation}
\left|x_i-\left\lfloor x_i \right\rceil_{\mathrm{FP4}(s/2)}\right|^2 \le \left|x_i-\left\lfloor x_i \right\rceil_{\mathrm{FP4}(s)}\right|^2.
\end{equation}
Multiplying both sides by $w_i\ge0$ and summing over all elements yields
\begin{equation}
\mathcal{L}(s/2;\mathbf{x},\mathbf{w}) \le \mathcal{L}(s;\mathbf{x},\mathbf{w}).
\end{equation}
\end{proof}

\subsection{Proof of Lemma~\ref{lem:upper_bound_of_bits_diff}}

\begin{proof}
Let the exponent field have \(e\) bits and the mantissa field have \(m\) bits. Define $M=2^m$.

For any positive \(EeMm\) number, let \(a\) denote the exponent field and let \(b\) denote the mantissa field, where $0\leq b<M$. If the number is exactly representable in \(EeMm\), then \(b\) is an integer. Since the proof also considers arbitrary positive real numbers \(x\), this notation is extended to non-representable values by allowing \(b\) to be real.

\paragraph{Subnormal case.}
Assume that \(x\) lies in the subnormal range. Then
\begin{equation}
I_\mathrm{EeMm}\!\left(\lfloor x \rfloor_\mathrm{EeMm}\right)=\lfloor b\rfloor .
\end{equation}

Since \(\alpha\leq 2\), we have \(\alpha b<2M\), and therefore
\begin{equation}
I_\mathrm{EeMm}\!\left(\lfloor \alpha x \rfloor_\mathrm{EeMm}\right)\leq \lfloor \alpha b\rfloor .
\end{equation}
Hence,
\begin{align}
& I_\mathrm{EeMm}\!\left(\lfloor \alpha x \rfloor_\mathrm{EeMm}\right)-I_\mathrm{EeMm}\!\left(\lfloor x \rfloor_\mathrm{EeMm}\right) \notag \\
& \leq \lfloor \alpha b\rfloor-\lfloor b\rfloor .
\end{align}

Using
\begin{equation}
\lfloor \alpha b\rfloor-\lfloor b\rfloor\leq \lceil (\alpha-1)b\rceil ,
\end{equation}
together with \(b<M\), we obtain
\begin{equation}
\lfloor \alpha b\rfloor-\lfloor b\rfloor\leq \left\lceil (\alpha-1)M \right\rceil .
\end{equation}

Since \(1\leq \alpha\leq 2\),
\begin{equation}
(\alpha-1)M\leq \left(1-\frac{1}{\alpha}\right)2M .
\end{equation}
Therefore,
\begin{align}
& I_\mathrm{EeMm}\!\left(\lfloor \alpha x \rfloor_\mathrm{EeMm}\right)
- I_\mathrm{EeMm}\!\left(\lfloor x \rfloor_\mathrm{EeMm}\right) \notag \\
& \leq \left\lceil \left(1-\frac{1}{\alpha}\right)2M \right\rceil .
\end{align}

\paragraph{Normal case below the binade boundary.}
Assume that \(x\) lies in the normal range. Then
\begin{equation}
I_\mathrm{EeMm}\!\left(\lfloor x \rfloor_\mathrm{EeMm}\right)=aM+\lfloor b\rfloor .
\end{equation}

Suppose that multiplying by \(\alpha\) does not cross into the next exponent interval:
\begin{equation}
\alpha(M+b)\leq 2M .
\end{equation}
Then
\begin{equation}
I_\mathrm{EeMm}\!\left(\lfloor \alpha x \rfloor_\mathrm{EeMm}\right)
\leq
aM+\left\lfloor \alpha(M+b)-M \right\rfloor .
\end{equation}
Therefore,
\begin{align}
& I_\mathrm{EeMm}\!\left(\lfloor \alpha x \rfloor_\mathrm{EeMm}\right)
- I_\mathrm{EeMm}\!\left(\lfloor x \rfloor_\mathrm{EeMm}\right) \notag \\
& \leq \left\lfloor b+(\alpha-1)(M+b) \right\rfloor-\lfloor b\rfloor .
\end{align}

Using
\begin{align}
& \left\lfloor b+(\alpha-1)(M+b) \right\rfloor-\lfloor b\rfloor \notag \\
& \leq \left\lceil (\alpha-1)(M+b) \right\rceil,
\end{align}
together with \(\alpha(M+b)\leq 2M\), we obtain
\begin{equation}
\left\lfloor b+(\alpha-1)(M+b) \right\rfloor-\lfloor b\rfloor
\leq
\left(1-\frac{1}{\alpha}\right)2M .
\end{equation}
Hence,
\begin{align}
& I_\mathrm{EeMm}\!\left(\lfloor \alpha x \rfloor_\mathrm{EeMm}\right)
- I_\mathrm{EeMm}\!\left(\lfloor x \rfloor_\mathrm{EeMm}\right) \notag \\
& \leq \left\lceil \left(1-\frac{1}{\alpha}\right)2M \right\rceil .
\end{align}

\paragraph{Normal case beyond the binade boundary.}

Now assume $\alpha (M+b)>2M$. At the binade boundary \(\alpha(M+b)=2M\), the desired upper bound has already been obtained in the previous case. Moreover, \(\lfloor \alpha x \rfloor_\mathrm{EeMm}\) is exactly the first representable value on this boundary. After this point, increasing \(I_\mathrm{EeMm}\!\left(\lfloor \alpha x \rfloor_\mathrm{EeMm}\right)\) by one requires an increase of \(2/\alpha\) in the mantissa field of \(x\), while increasing \(I_\mathrm{EeMm}\!\left(\lfloor x \rfloor_\mathrm{EeMm}\right)\) by one requires an increase of \(1\). Since \(2/\alpha\geq 1\), the difference cannot exceed its value at the boundary. Hence,
\begin{align}
& I_\mathrm{EeMm}\!\left(\lfloor \alpha x \rfloor_\mathrm{EeMm}\right)
- I_\mathrm{EeMm}\!\left(\lfloor x \rfloor_\mathrm{EeMm}\right) \notag \\
& \leq \left\lceil \left(1-\frac{1}{\alpha}\right)2M \right\rceil .
\end{align}

\end{proof}

\subsection{Proof of Lemma~\ref{lem:lower_bound_of_MSE}}

\begin{proof}

The proof consists of first excluding all ratios satisfying $\le \tfrac34$, and then eliminating the remaining ratios in the interval $(\tfrac34,\tfrac45)$ one by one.

\paragraph{Notation}
Let
\begin{equation}
s_0 = s_{\mathrm{base}}^{\mathrm{FP8}} 
= \left\lfloor \frac{\max_i |x_i|}{6} \right\rfloor_{\mathrm{FP8}}.
\end{equation}
By symmetry of the FP4 value set, it suffices to consider only nonnegative values. Define the nonnegative FP4 E2M1 grids as
\begin{equation}
\mathcal{G}=\left\{0,\tfrac12,1,\tfrac32,2,3,4,6\right\}.
\end{equation}
The normalized single-coordinate quantization loss is defined by
\begin{equation}
d_r(y)=\min_{g\in \mathcal{G}}(y-rg)^2 .
\end{equation}
Let $y_i=|x_i|/s_0$ and define the scale ratio $r=s/s_0$. Then minimizing $\mathcal{L}(s;\mathbf{x})$ over FP8 scales $s$ is equivalent to minimizing
\begin{equation}
\bar{\mathcal{L}}(r;\mathbf{y})
=
\sum_{i=0}^{15} d_r(y_i)
\end{equation}
over the corresponding FP8 ratios $r$.
Furthermore, suppose that
\begin{equation}
s_0=\frac{m}{8}2^a,
\qquad
m\in\{8,\dots,15\}.
\end{equation}
Then the normalized coordinates satisfy
\begin{equation}
\max_i y_i = 6\alpha,
\qquad
1 \le \alpha < \rho_m,
\end{equation}
where $\rho_m=\frac{m+1}{m}$.

\begin{table*}[ht]
\renewcommand{\arraystretch}{1.3}
\centering
\begin{tabular}{cccccc}
\toprule
$r$ & $\{u_j\}$ & $\{\lambda_j\}$ & $B$ & $A$ & $A+15B$ \\
\midrule
$\frac{7}{9}$ & $\{\frac{5}{6},\frac{10}{9},\frac{14}{9}\}$ & $\{\frac{1}{4},\frac{1}{4},\frac{1}{2}\}$ & $\frac{143}{1728}$ & $-\frac{451}{324}$ & $-\frac{781}{5184}$ \\
$\frac{10}{13}$ & $\{\frac{11}{13},\frac{14}{13},\frac{20}{13}\}$ & $\{\frac{1}{4},\frac{1}{4},\frac{1}{2}\}$ & $\frac{18}{169}$ & $-\frac{277}{169}$ & $-\frac{7}{169}$ \\
$\frac{11}{14}$ & $\{\frac{6}{7},\frac{15}{14},\frac{11}{7}\}$ & $\{\frac{1}{4},\frac{1}{4},\frac{1}{2}\}$ & $\frac{9}{98}$ & $-\frac{271}{196}$ & $-\frac{1}{196}$ \\
\bottomrule
\end{tabular}
\caption{
Convex certificates used to exclude the remaining FP8 E4M3 scale ratios in $(3/4,4/5)$. For each candidate ratio $r$, the listed FP8 ratios $\{u_j\}$ and weights $\{\lambda_j\}$ satisfy $\sum_j\lambda_j=1$. The positive margin $A_r-15B_r$ proves that the weighted average loss over $\{u_j\}$ is strictly smaller than the loss at $r$ for any block of size $16$.
}
\label{tab:lower_bound_convex_certificates}
\end{table*}

\paragraph{Proof sketch.}
For each candidate ratio $r<\tfrac54$, we find another valid ratio $r'$ satisfying
\begin{equation}
r<r'\le \tfrac{11}{4},
\end{equation}
and show that
\begin{equation}
\bar{\mathcal{L}}(r';\mathbf{y})
\le
\bar{\mathcal{L}}(r;\mathbf{y})
\end{equation}
for all admissible normalized coordinates $\mathbf{y}$. The worst-case increase of
\begin{equation}
\bar{\mathcal{L}}(r';\mathbf{y})
-
\bar{\mathcal{L}}(r;\mathbf{y})
\end{equation}
is obtained by combining the largest possible contribution from the maximal coordinate with the largest possible contribution from each of the remaining $15$ coordinates. Therefore, it suffices to verify
\begin{align}
A_{r',r}
&=
\sup_{6\le y\le 6\rho_m}
\left(d_{r'}(y)-d_r(y)\right), \notag \\
B_{r',r}
&=
\sup_{0\le y\le 6\rho_m}
\left(d_{r'}(y)-d_r(y)\right), \notag \\
A_{r',r} &+15B_{r',r} \le 0.
\end{align}

\paragraph{Excluding $r\leq \tfrac{3}{4}$.}
We first consider ratios satisfying $r\le \tfrac34$, for which we choose $r'=2r$. The grids induced by $r$ and $2r$ share most reconstruction points, while the grid associated with $r$ additionally contains the intermediate points $\tfrac12 r$ and $\tfrac32 r$. Therefore,
\begin{equation}
d_{2r}(y)-d_r(y)
\le
\frac{r^2}{4},
\qquad
0\le y\leq 6\rho_m,
\end{equation}
where equality is attained at $y=\tfrac12 r$ and $y=\tfrac32 r$. Consequently,
\begin{equation}
B_{2r,r}
=
\sup_{0\le y<6\rho_m}
\left(d_{2r}(y)-d_r(y)\right)
=
\frac{r^2}{4}.
\end{equation}
Therefore, replacing $r$ with $2r$ increases the total loss on the remaining $15$ coordinates by at most $15r^2/4$.

Now consider the maximal coordinate $y_{\max}=6\alpha$. Since $r\le \tfrac34$ and $\alpha\ge 1$, the maximal reconstruction value achievable under scale $r$ equals $6r$, which gives
\begin{equation}
d_r(6\alpha)=(6\alpha-6r)^2.
\end{equation}
Under scale $2r$, selecting the FP4 grid point $4$ yields reconstruction value $8r$, and hence
\begin{equation}
d_{2r}(6\alpha)\le (6\alpha-8r)^2.
\end{equation}
Therefore,
\begin{align}
d_{2r}(6\alpha)-d_r(6\alpha)
&\le
(6\alpha-8r)^2-(6\alpha-6r)^2 \notag \\
&=
-24\alpha r+28r^2 .
\end{align}
Using $\alpha\ge 1$ and $r\le \tfrac34$ yields
\begin{align}
A_{2r,r}
&=
\sup_{6\le y\le 6\rho_m}
\left(d_{2r}(y)-d_r(y)\right) \notag \\
&\le
-24\alpha r+28r^2 \notag \\
&\le
-24r+28r^2 .
\end{align}
Combining this bound with
\begin{equation}
B_{2r,r}=\frac{r^2}{4}
\end{equation}
gives
\begin{align}
A_{2r,r}+15B_{2r,r}
&\le
-24r+28r^2+\frac{15r^2}{4} \notag \\
&=
-r\left(24-\frac{127}{4}r\right).
\end{align}
Since $r\le \tfrac34$,
\begin{equation}
24-\frac{127}{4}r
\ge
24-\frac{127}{4}\cdot\frac34
=
\frac{3}{16}
>0.
\end{equation}
Hence,
\begin{equation}
A_{2r,r}+15B_{2r,r}<0,
\end{equation}
which implies
\begin{equation}
\bar{\mathcal{L}}(2r;\mathbf{y})
<
\bar{\mathcal{L}}(r;\mathbf{y}).
\end{equation}
Therefore, no ratio $r\le \tfrac34$ can be optimal.

\paragraph{Excluding $r\in (3/4, 4/5)$}
It remains to consider FP8 ratios in the interval
$(3/4,4/5)$. Since a positive normal E4M3 value has significand
$m/8$ with $m\in\{8,\dots,15\}$, every ratio between two such FP8
values has the form
\begin{equation}
r=\frac{j}{m}2^{\Delta a},
\qquad
j,m\in\{8,\dots,15\}.
\end{equation}
A finite enumeration shows that the only legal ratios in
$(3/4,4/5)$ are
\begin{equation}
\frac79,\qquad \frac{10}{13},\qquad \frac{11}{14}. \notag
\end{equation}

We exclude the remaining cases using convex certificates. For a fixed candidate ratio $r$, consider valid FP8 ratios $u_j$ and nonnegative weights $\lambda_j$ satisfying $\sum_j\lambda_j=1$, and define
\begin{equation}
\Delta_{\{u_j\},r}(y)=\sum_j \lambda_j d_{u_j}(y)-d_r(y).
\end{equation}
This construction extends the previous argument by allowing comparison against a convex combination of multiple ratios instead of a single ratio $r'$. Define
\begin{align}
A_{\{u_j\},r}&=\sup_{6\le y\le 6\rho_m}\Delta_{\{u_j\},r}(y), \notag \\
B_{\{u_j\},r}&=\sup_{0\le y\le 6\rho_m}\Delta_{\{u_j\},r}(y).
\end{align}
If
\begin{equation}
A_{\{u_j\},r}+15B_{\{u_j\},r}\leq 0,
\end{equation}
then for any block containing a maximal coordinate,
\begin{align}
& \sum_j\lambda_j\bar{\mathcal{L}}(u_j;\mathbf{y})-\bar{\mathcal{L}}(r;\mathbf{y}) \\
&= \sum_{i=0}^{15}\Delta_{\{u_j\},r}(y_i) \notag \\
&\le A_{\{u_j\},r}+15B_{\{u_j\},r} \\
& \leq 0.
\end{align}
Therefore, at least one ratio $u_j$ achieves strictly smaller loss than $r$, implying that $r$ cannot be optimal.

The required certificates are summarized in Table~\ref{tab:lower_bound_convex_certificates}. All listed certificates satisfy
\begin{equation}
A_{\{u_j\},r}+15B_{\{u_j\},r}\leq 0.
\end{equation}
Since $\sum_j\lambda_j=1$, the quadratic $y^2$ terms cancel in $\Delta_{\{u_j\},r}(y)$. Therefore, $\Delta_{\{u_j\},r}(y)$ is piecewise affine in $y$, and all extrema are attained at breakpoints. Consequently, both $A_{\{u_j\},r}$ and $B_{\{u_j\},r}$ can be computed by enumerating finitely many breakpoints. The coefficients $\lambda_j$ are obtained either by exhaustive search or by solving a linear program.

\paragraph{Summary}
In summary, for every ratio satisfying $r\le \tfrac45$, there exists a valid FP8 ratio $r'\le \tfrac{11}{4}$, or more generally a convex combination of valid FP8 ratios $\{u_j\}$ with $u_j\le \tfrac{11}{4}$, such that
\begin{equation}
\bar{\mathcal{L}}(r';\mathbf{y})
\le
\bar{\mathcal{L}}(r;\mathbf{y}),
\end{equation}
or
\begin{equation}
\min_{j}\bar{\mathcal{L}}(u_j;\mathbf{y})\le \sum_j\lambda_j\bar{\mathcal{L}}(u_j;\mathbf{y})
\le
\bar{\mathcal{L}}(r;\mathbf{y}),
\end{equation}
for all admissible normalized coordinates $\mathbf{y}$. Therefore, no ratio $r\le \tfrac45$ can be optimal.
\end{proof}

\begin{table*}[ht]
\small
\centering
\begin{tabular}{ccc}
\toprule
\textbf{Dataset} & \textbf{Size} & \textbf{Evaluation Domain} \\
\midrule
MMLU-Pro & 12,032 & Broad domain knowledge and reasoning. \\
GSM8K & 1,319 & Grade school mathematical reasoning. \\
IFEval & 541 & Verifiable instruction following. \\
HellaSwag & 10,042 & Commonsense sentence completion. \\
WinoGrande & 1,267 & Winograd style commonsense reasoning. \\
\bottomrule
\end{tabular}
\caption{Dataset size and domain for MMLU Pro, GSM8K, IFEval, HellaSwag, and WinoGrande.}
\label{tab:test_datasets}
\end{table*}

\section{Optimal FP32 Scale for FP4 Quantization} \label{appendix:optimal-scale-for-fp4}

\paragraph{Optimal FP32 Block Scale}
The WMSE objective in Eq.~\eqref{eq:weighted_quantization_loss} can be equivalently written as
\begin{equation}
\mathcal{L}(s; \mathbf{x}, \mathbf{w}) =
\sum_{i=0}^{n-1} w_i \left( x_i - \left\lfloor x_i/s \right\rceil_{\mathrm{FP4}} \cdot s \right)^2.
\end{equation}
Each term $w_i \left( x_i - \left\lfloor x_i/s \right\rceil_{\mathrm{FP4}} \cdot s \right)^2$ constitutes a piecewise quadratic function of $s$, with 15 segments corresponding to the breakpoints at which $x_i / s$ aligns with the midpoints between consecutive FP4 representable values. Consequently, $\mathcal{L}(s; \mathbf{x}, \mathbf{w})$ is a piecewise quadratic function of $s$, with breakpoints given by the union of all individual breakpoints, yielding at most $14n+1$ segments. The exact global minimum of $\mathcal{L}(s; \mathbf{x}, \mathbf{w})$ can be computed efficiently by enumerating all segments, determining the quadratic coefficients within each, and evaluating the minimum of each quadratic segment.

\paragraph{FP8-quantized Optimal FP32 Scale}
The FP8-quantized optimal FP32 scale $s_{\mathrm{fp8}}$ is obtained by quantizing the optimal FP32 scale $s$ with the global scale $S$, where $s_{\mathrm{fp8}}$ is selected between $\lfloor s / S \rfloor_\mathrm{FP8}$ and $\lceil s / S \rceil_\mathrm{FP8}$ according to the minimization of $\mathcal{L}(s_{\mathrm{fp8}}\cdot S; \mathbf{x}, \mathbf{w})$.

\section{Additional Implementation Details} \label{appendix:implementation_details}

\subsection{Calibration Details}

\paragraph{Calibration} We follow PV-Tuning~\citep{pv-tuning} and sample 128 sequences of length 2048 from the 1B samples released in the official RedPajama dataset as calibration samples. Calibration is performed only once using a fixed seed of 0.

\paragraph{Quantization} For activation quantization, KV cache quantization, and query states quantization in NVFP4, we follow the practices presented in LLM-Compressor\footnote{\href{https://docs.vllm.ai/projects/llm-compressor/en/latest/examples/quantization_w4a4_fp4/\#2-prepare-calibration-data}{LLM-Compressor W4A4 FP4 quantization example}} and TensorRT\footnote{\href{https://docs.nvidia.com/deeplearning/tensorrt/latest/inference-library/work-quantized-types.html\#dynamic-double-quantization}{TensorRT dynamic double quantization guide}}, employing a static global scale. The global scale is determined during the calibration stage and remains fixed throughout the inference stage.

\paragraph{GPTQ} 
For GPTQ~\citep{gptq}, we employ LDLQ~\citep{quip}, which has been shown to be algorithmically equivalent to GPTQ. The damping factor is set to $0.01$ times the mean of the Hessian diagonal. Following MR-GPTQ~\citep{mr_gptq}, we apply static reordering: the block scales is determined prior to GPTQ, and the input channels for GPTQ quantization are ordered according to the Hessian diagonal in descending order.

\subsection{Evaluation Details}

For MMLU Pro, GSM8K, IFEval, HellaSwag, and WinoGrande, we use lm-evaluation-harness~\citep{eval-harness} for evaluation, corresponding to the tasks \texttt{mmlu\_pro\_llama}, \texttt{gsm8k\_llama}, \texttt{ifeval}, \texttt{hellaswag}, and \texttt{winogrande}, respectively. Among these, \texttt{apply\_chat\_template} is enabled for MMLU Pro, GSM8K, and IFEval. The size and evaluation domain of the five benchmarks are summarized in Table~\ref{tab:test_datasets}.

\subsection{Operator Benchmark Details}

We conduct operator-level benchmarks using PyTorch 2.9.0, Triton 3.5.0, and vLLM 0.13.0 on NVIDIA RTX PRO 6000 Blackwell GPUs. Since the default NVFP4 operators in vLLM employ a swizzle layout, the batch dimension is padded to multiples of 128. Consequently, latency results for batch sizes smaller than 128 may not accurately.

\section{Full Result} \label{appendix:full_result}

Table~\ref{tab:kernel_latency} presents the latency comparison and relative latency ratios under different batch sizes with hidden dimension 8192 on NVIDIA RTX PRO 6000 Blackwell GPUs.
Figure~\ref{fig:mse_and_wmse_on_each_layer_QKV} presents the MSE and WMSE gap (\%) of different initialization methods relative to the optimal FP32 block scale for key cache, value cache, and query state quantization on all layers of Llama-3.1-8B-Instruct.
Table~\ref{tab:main_result_llama_3_2_3B} presents the results on Llama-3.2-3B-Instruct, while Table~\ref{tab:main_result_qwen_3_4B_8B} presents the results on Qwen3-4B and Qwen3-8B.

\begin{table*}
\small
\centering
\begin{tabular}{llrrrrrrrrr}
\toprule
\textbf{Method} & \textbf{Metric} & 128 & 256 & 512 & 1024 & 2048 & 4096 & 8192 & 16384 & 32768 \\
\midrule
vllm default & Latency(ms) & 0.010 & 0.012 & 0.016 & 0.025 & 0.041 & 0.072 & 0.135 & 0.258 & 0.502 \\
ScaleSweep$_\mathrm{MSE}$ & Latency(ms) & 0.010 & 0.013 & 0.017 & 0.026 & 0.042 & 0.072 & 0.134 & 0.260 & 0.502 \\
ScaleSweep$_\mathrm{MSE}$ & Rel. Latency & 0.97$\times$ & 1.08$\times$ & 1.03$\times$ & 1.03$\times$ & 1.01$\times$ & 0.99$\times$ & 1.00$\times$ & 1.01$\times$ & 1.00$\times$ \\
ScaleSweep & Latency(ms) & 0.012 & 0.016 & 0.021 & 0.032 & 0.055 & 0.099 & 0.185 & 0.357 & 0.689 \\
ScaleSweep & Rel. Latency & 1.23$\times$ & 1.35$\times$ & 1.26$\times$ & 1.29$\times$ & 1.33$\times$ & 1.36$\times$ & 1.38$\times$ & 1.38$\times$ & 1.37$\times$ \\
\bottomrule
\end{tabular}
\caption{Comparison of latency and relative latency ratio of the NVFP4 quantization operator across different batch sizes with hidden dimension 8192 on NVIDIA RTX PRO 6000 Blackwell GPUs. ``Rel. Latency'' refers to relative latency ratio, which denotes the latency ratio relative to the default vLLM operator.}
\label{tab:kernel_latency}
\end{table*}

\begin{figure*}[t]
  \includegraphics[width=\textwidth]{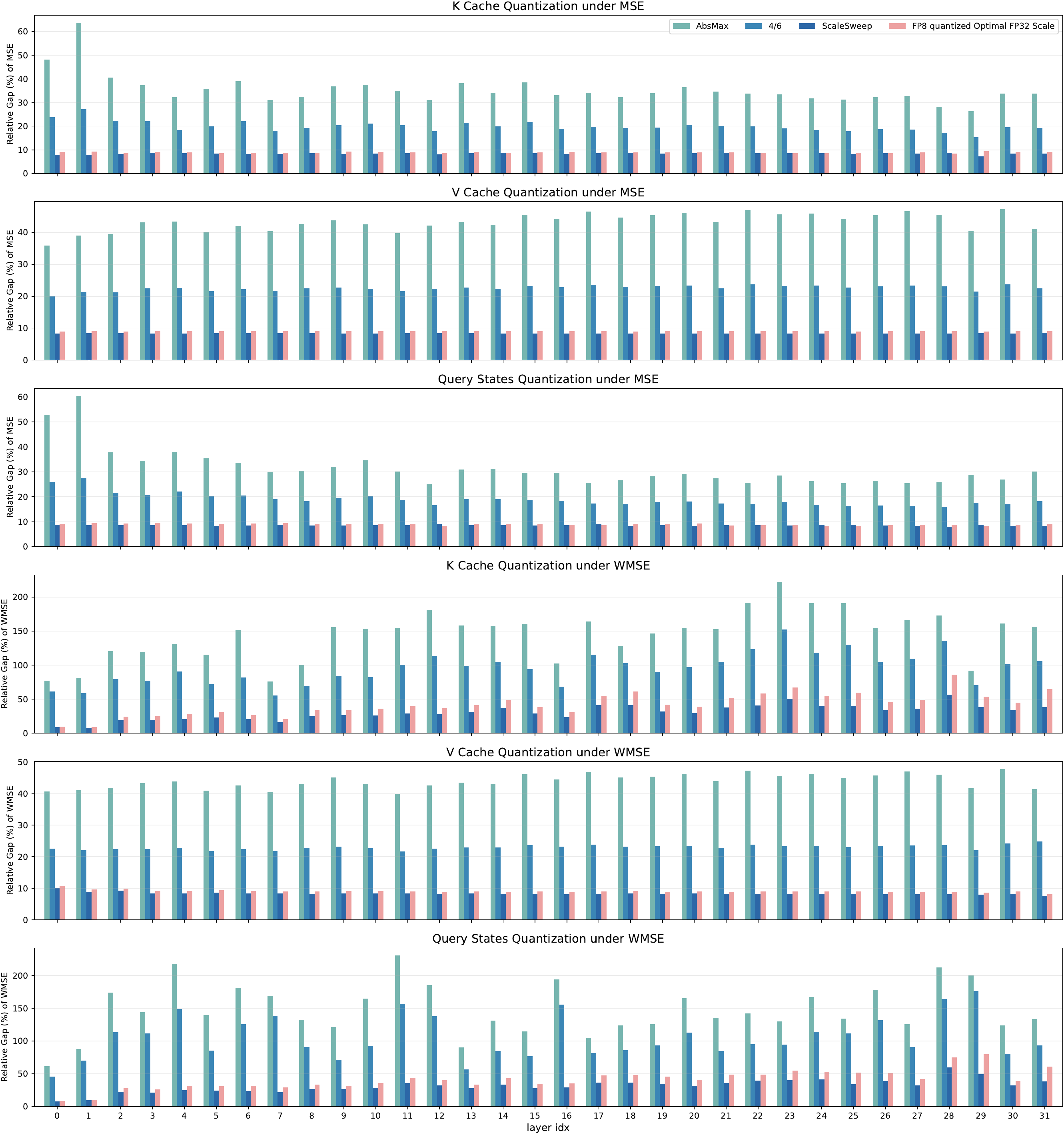}
  \caption{Performance gap (\%) of ScaleSweep, initialization baselines, and FP8-quantized optimal FP32 scales relative to the optimal FP32 block scale in terms of MSE and WMSE for KV cache quantization and query states quantization on Llama-3.1-8B-Instruct.}
  \label{fig:mse_and_wmse_on_each_layer_QKV}
\end{figure*}

\begin{table*}[th]
\small
\centering
\begin{tabular}{ccc@{\hskip 5pt}c@{\hskip 4pt}cccccc@{\hskip 3pt}c}
\toprule
\textbf{Setting} & \textbf{Method} & \textbf{Init Method} & \textbf{MMLU Pro} & \textbf{GSM8k} & \textbf{IFEval} & \textbf{HellaS} & \textbf{WinoG} & \textbf{Avg} & \textbf{Recovery (\%)} \\
\midrule
\multicolumn{10}{c}{\textbf{Llama-3.2-3B-Instruct}} \\
\midrule- & BF16 & - & 37.30 & 78.54 & 70.24 & 71.75 & 68.51 & 65.27 & 100 \\
\cmidrule(l){1-10}
\multirow[c]{8}{*}{WA} & \multirow[c]{4}{*}{RTN} & AbsMax & 33.43 & 71.49 & 63.96 & 70.30 & 68.27 & 61.49 & 94.21 \\
 &  & 4/6 & 34.10 & 71.95 & 67.28 & 70.11 & 65.51 & 61.79 & 94.67 \\
 &  & \cellcolor{blue!15}ScaleSweep$_\mathrm{MSE}$ & \cellcolor{blue!15}33.96 & \cellcolor{blue!15}\textbf{73.46} & \cellcolor{blue!15}68.02 & \cellcolor{blue!15}\textbf{70.39} & \cellcolor{blue!15}67.09 & \cellcolor{blue!15}62.59 & \cellcolor{blue!15}95.89 \\
 &  & \cellcolor{blue!15}ScaleSweep & \cellcolor{blue!15}\textbf{34.12} & \cellcolor{blue!15}73.16 & \cellcolor{blue!15}\textbf{68.76} & \cellcolor{blue!15}70.18 & \cellcolor{blue!15}\textbf{68.98} & \cellcolor{blue!15}\textbf{63.04} & \cellcolor{blue!15}\textbf{96.58} \\
\cmidrule(l){2-10}
 & \multirow[c]{4}{*}{GPTQ} & AbsMax & 33.29 & 71.57 & 65.43 & 69.85 & \textbf{68.75} & 61.78 & 94.65 \\
 &  & 4/6 & 34.23 & 72.78 & 69.69 & \textbf{70.53} & 66.06 & 62.66 & 96.00 \\
 &  & \cellcolor{blue!15}ScaleSweep$_\mathrm{MSE}$ & \cellcolor{blue!15}\textbf{35.09} & \cellcolor{blue!15}\textbf{74.68} & \cellcolor{blue!15}65.62 & \cellcolor{blue!15}70.26 & \cellcolor{blue!15}\textbf{68.75} & \cellcolor{blue!15}62.88 & \cellcolor{blue!15}96.34 \\
 &  & \cellcolor{blue!15}ScaleSweep & \cellcolor{blue!15}34.97 & \cellcolor{blue!15}73.69 & \cellcolor{blue!15}\textbf{70.61} & \cellcolor{blue!15}70.37 & \cellcolor{blue!15}66.46 & \cellcolor{blue!15}\textbf{63.22} & \cellcolor{blue!15}\textbf{96.86} \\
\cmidrule(l){1-10}
\multirow[c]{8}{*}{WAKV} & \multirow[c]{4}{*}{RTN} & AbsMax & 30.93 & 67.17 & 66.36 & 69.74 & \textbf{68.03} & 60.45 & 92.61 \\
 &  & 4/6 & 31.99 & 69.83 & \textbf{67.84} & 69.63 & 66.69 & 61.19 & 93.76 \\
 &  & \cellcolor{blue!15}ScaleSweep$_\mathrm{MSE}$ & \cellcolor{blue!15}33.08 & \cellcolor{blue!15}\textbf{72.33} & \cellcolor{blue!15}67.47 & \cellcolor{blue!15}69.79 & \cellcolor{blue!15}66.30 & \cellcolor{blue!15}61.79 & \cellcolor{blue!15}94.67 \\
 &  & \cellcolor{blue!15}ScaleSweep & \cellcolor{blue!15}\textbf{33.10} & \cellcolor{blue!15}\textbf{72.33} & \cellcolor{blue!15}\textbf{67.84} & \cellcolor{blue!15}\textbf{70.08} & \cellcolor{blue!15}67.80 & \cellcolor{blue!15}\textbf{62.23} & \cellcolor{blue!15}\textbf{95.34} \\
\cmidrule(l){2-10}
 & \multirow[c]{4}{*}{GPTQ} & AbsMax & 31.26 & 67.63 & 66.73 & 69.81 & 66.85 & 60.45 & 92.62 \\
 &  & 4/6 & 32.41 & 71.11 & 66.17 & 69.50 & \textbf{67.40} & 61.32 & 93.95 \\
 &  & \cellcolor{blue!15}ScaleSweep$_\mathrm{MSE}$ & \cellcolor{blue!15}33.44 & \cellcolor{blue!15}\textbf{72.33} & \cellcolor{blue!15}65.62 & \cellcolor{blue!15}\textbf{70.22} & \cellcolor{blue!15}66.93 & \cellcolor{blue!15}61.71 & \cellcolor{blue!15}94.54 \\
 &  & \cellcolor{blue!15}ScaleSweep & \cellcolor{blue!15}\textbf{33.75} & \cellcolor{blue!15}71.80 & \cellcolor{blue!15}\textbf{67.65} & \cellcolor{blue!15}70.07 & \cellcolor{blue!15}67.25 & \cellcolor{blue!15}\textbf{62.10} & \cellcolor{blue!15}\textbf{95.15} \\
\cmidrule(l){1-10}
\multirow[c]{8}{*}{WAKVQ} & \multirow[c]{4}{*}{RTN} & AbsMax & 28.00 & 61.87 & 63.96 & 68.71 & 65.43 & 57.59 & 88.24 \\
 &  & 4/6 & 30.44 & 67.32 & 60.44 & 69.46 & \textbf{67.32} & 59.00 & 90.39 \\
 &  & \cellcolor{blue!15}ScaleSweep$_\mathrm{MSE}$ & \cellcolor{blue!15}31.33 & \cellcolor{blue!15}68.31 & \cellcolor{blue!15}66.17 & \cellcolor{blue!15}69.13 & \cellcolor{blue!15}66.69 & \cellcolor{blue!15}60.33 & \cellcolor{blue!15}92.43 \\
 &  & \cellcolor{blue!15}ScaleSweep & \cellcolor{blue!15}\textbf{31.93} & \cellcolor{blue!15}\textbf{69.90} & \cellcolor{blue!15}\textbf{66.91} & \cellcolor{blue!15}\textbf{69.73} & \cellcolor{blue!15}66.38 & \cellcolor{blue!15}\textbf{60.97} & \cellcolor{blue!15}\textbf{93.41} \\
\cmidrule(l){2-10}
 & \multirow[c]{4}{*}{GPTQ} & AbsMax & 29.81 & 62.62 & 61.74 & 69.05 & 65.35 & 57.71 & 88.43 \\
 &  & 4/6 & 30.73 & 66.11 & 65.80 & 69.01 & 64.80 & 59.29 & 90.84 \\
 &  & \cellcolor{blue!15}ScaleSweep$_\mathrm{MSE}$ & \cellcolor{blue!15}32.21 & \cellcolor{blue!15}68.76 & \cellcolor{blue!15}62.66 & \cellcolor{blue!15}69.25 & \cellcolor{blue!15}65.82 & \cellcolor{blue!15}59.74 & \cellcolor{blue!15}91.53 \\
 &  & \cellcolor{blue!15}ScaleSweep & \cellcolor{blue!15}\textbf{33.02} & \cellcolor{blue!15}\textbf{71.11} & \cellcolor{blue!15}\textbf{65.99} & \cellcolor{blue!15}\textbf{69.43} & \cellcolor{blue!15}\textbf{66.30} & \cellcolor{blue!15}\textbf{61.17} & \cellcolor{blue!15}\textbf{93.72} \\
\bottomrule
\end{tabular}
\caption{Comparison results of ScaleSweep and baseline initialization methods under RTN and GPTQ across different settings on Llama-3.2-3B-Instruct.} \label{tab:main_result_llama_3_2_3B}
\end{table*}

\begin{table*}
\small
\centering
\begin{tabular}{ccc@{\hskip 5pt}c@{\hskip 4pt}cccccc@{\hskip 3pt}c}
\toprule
\textbf{Setting} & \textbf{Method} & \textbf{Init Method} & \textbf{MMLU Pro} & \textbf{GSM8k} & \textbf{IFEval} & \textbf{HellaS} & \textbf{WinoG} & \textbf{Avg} & \textbf{Recovery (\%)} \\
\midrule
\multicolumn{10}{c}{\textbf{Qwen3-8B}} \\
\midrule- & BF16 & - & 61.99 & 91.28 & 81.70 & 74.91 & 68.11 & 75.60 & 100 \\
\cmidrule(l){1-10}
\multirow[c]{8}{*}{WA} & \multirow[c]{4}{*}{RTN} & AbsMax & 58.78 & 89.31 & 81.33 & 73.20 & 68.19 & 74.16 & 98.10 \\
 &  & 4/6 & 59.38 & 89.61 & 81.15 & 73.37 & 68.03 & 74.31 & 98.30 \\
 &  & \cellcolor{blue!15}ScaleSweep$_\mathrm{MSE}$ & \cellcolor{blue!15}\textbf{59.67} & \cellcolor{blue!15}89.23 & \cellcolor{blue!15}82.62 & \cellcolor{blue!15}73.66 & \cellcolor{blue!15}67.64 & \cellcolor{blue!15}74.57 & \cellcolor{blue!15}98.63 \\
 &  & \cellcolor{blue!15}ScaleSweep & \cellcolor{blue!15}59.57 & \cellcolor{blue!15}\textbf{90.37} & \cellcolor{blue!15}\textbf{83.18} & \cellcolor{blue!15}\textbf{73.68} & \cellcolor{blue!15}\textbf{69.30} & \cellcolor{blue!15}\textbf{75.22} & \cellcolor{blue!15}\textbf{99.50} \\
\cmidrule(l){2-10}
 & \multirow[c]{4}{*}{GPTQ} & AbsMax & \textbf{58.29} & 88.78 & 80.41 & 73.50 & 66.77 & 73.55 & 97.29 \\
 &  & 4/6 & 52.24 & 89.23 & 81.52 & 73.31 & 67.25 & 72.71 & 96.18 \\
 &  & \cellcolor{blue!15}ScaleSweep$_\mathrm{MSE}$ & \cellcolor{blue!15}56.91 & \cellcolor{blue!15}\textbf{89.99} & \cellcolor{blue!15}\textbf{82.44} & \cellcolor{blue!15}\textbf{73.97} & \cellcolor{blue!15}\textbf{67.96} & \cellcolor{blue!15}\textbf{74.25} & \cellcolor{blue!15}\textbf{98.22} \\
 &  & \cellcolor{blue!15}ScaleSweep & \cellcolor{blue!15}57.95 & \cellcolor{blue!15}89.31 & \cellcolor{blue!15}82.07 & \cellcolor{blue!15}73.41 & \cellcolor{blue!15}67.80 & \cellcolor{blue!15}74.11 & \cellcolor{blue!15}98.03 \\
\cmidrule(l){1-10}
\multirow[c]{8}{*}{WAKV} & \multirow[c]{4}{*}{RTN} & AbsMax & 56.48 & 86.81 & 80.59 & 72.34 & \textbf{68.59} & 72.96 & 96.51 \\
 &  & 4/6 & 56.09 & 89.46 & 80.59 & 73.54 & 66.14 & 73.17 & 96.78 \\
 &  & \cellcolor{blue!15}ScaleSweep$_\mathrm{MSE}$ & \cellcolor{blue!15}57.50 & \cellcolor{blue!15}89.16 & \cellcolor{blue!15}81.70 & \cellcolor{blue!15}73.50 & \cellcolor{blue!15}65.98 & \cellcolor{blue!15}73.57 & \cellcolor{blue!15}97.31 \\
 &  & \cellcolor{blue!15}ScaleSweep & \cellcolor{blue!15}\textbf{58.05} & \cellcolor{blue!15}\textbf{89.61} & \cellcolor{blue!15}\textbf{83.73} & \cellcolor{blue!15}\textbf{73.79} & \cellcolor{blue!15}65.90 & \cellcolor{blue!15}\textbf{74.22} & \cellcolor{blue!15}\textbf{98.17} \\
\cmidrule(l){2-10}
 & \multirow[c]{4}{*}{GPTQ} & AbsMax & 54.90 & 88.55 & \textbf{83.92} & 72.97 & \textbf{67.40} & \textbf{73.55} & \textbf{97.29} \\
 &  & 4/6 & 47.91 & 88.55 & 81.33 & 73.04 & 67.25 & 71.62 & 94.73 \\
 &  & \cellcolor{blue!15}ScaleSweep$_\mathrm{MSE}$ & \cellcolor{blue!15}\textbf{55.49} & \cellcolor{blue!15}88.40 & \cellcolor{blue!15}79.48 & \cellcolor{blue!15}73.56 & \cellcolor{blue!15}66.30 & \cellcolor{blue!15}72.65 & \cellcolor{blue!15}96.09 \\
 &  & \cellcolor{blue!15}ScaleSweep & \cellcolor{blue!15}51.95 & \cellcolor{blue!15}\textbf{89.31} & \cellcolor{blue!15}80.96 & \cellcolor{blue!15}\textbf{73.88} & \cellcolor{blue!15}66.93 & \cellcolor{blue!15}72.61 & \cellcolor{blue!15}96.04 \\
\cmidrule(l){1-10}
\multirow[c]{8}{*}{WAKVQ} & \multirow[c]{4}{*}{RTN} & AbsMax & 52.98 & 87.11 & \textbf{82.62} & 71.83 & \textbf{67.48} & 72.40 & 95.77 \\
 &  & 4/6 & 53.71 & 87.64 & 80.41 & \textbf{72.92} & 66.46 & 72.23 & 95.54 \\
 &  & \cellcolor{blue!15}ScaleSweep$_\mathrm{MSE}$ & \cellcolor{blue!15}56.81 & \cellcolor{blue!15}88.70 & \cellcolor{blue!15}81.70 & \cellcolor{blue!15}72.84 & \cellcolor{blue!15}66.69 & \cellcolor{blue!15}\textbf{73.35} & \cellcolor{blue!15}\textbf{97.02} \\
 &  & \cellcolor{blue!15}ScaleSweep & \cellcolor{blue!15}\textbf{57.34} & \cellcolor{blue!15}\textbf{88.86} & \cellcolor{blue!15}80.96 & \cellcolor{blue!15}72.66 & \cellcolor{blue!15}65.98 & \cellcolor{blue!15}73.16 & \cellcolor{blue!15}96.77 \\
\cmidrule(l){2-10}
 & \multirow[c]{4}{*}{GPTQ} & AbsMax & 51.19 & 88.32 & \textbf{82.07} & 72.58 & \textbf{67.56} & 72.34 & 95.69 \\
 &  & 4/6 & 48.09 & 88.25 & 80.59 & 72.78 & 67.17 & 71.38 & 94.41 \\
 &  & \cellcolor{blue!15}ScaleSweep$_\mathrm{MSE}$ & \cellcolor{blue!15}53.55 & \cellcolor{blue!15}\textbf{88.48} & \cellcolor{blue!15}80.22 & \cellcolor{blue!15}73.18 & \cellcolor{blue!15}65.98 & \cellcolor{blue!15}72.28 & \cellcolor{blue!15}95.61 \\
 &  & \cellcolor{blue!15}ScaleSweep & \cellcolor{blue!15}\textbf{56.08} & \cellcolor{blue!15}\textbf{88.48} & \cellcolor{blue!15}80.59 & \cellcolor{blue!15}\textbf{73.20} & \cellcolor{blue!15}66.69 & \cellcolor{blue!15}\textbf{73.01} & \cellcolor{blue!15}\textbf{96.57} \\
\midrule
\multicolumn{10}{c}{\textbf{Qwen3-4B}} \\
\midrule- & BF16 & - & 57.86 & 88.25 & 79.85 & 68.32 & 66.06 & 72.07 & 100 \\
\cmidrule(l){1-10}
\multirow[c]{8}{*}{WA} & \multirow[c]{4}{*}{RTN} & AbsMax & 52.86 & 86.58 & \textbf{79.67} & 65.77 & 62.51 & 69.48 & 96.40 \\
 &  & 4/6 & 54.21 & 86.73 & 78.56 & 66.02 & \textbf{63.85} & \textbf{69.87} & \textbf{96.95} \\
 &  & \cellcolor{blue!15}ScaleSweep$_\mathrm{MSE}$ & \cellcolor{blue!15}53.37 & \cellcolor{blue!15}86.05 & \cellcolor{blue!15}78.93 & \cellcolor{blue!15}\textbf{66.67} & \cellcolor{blue!15}63.30 & \cellcolor{blue!15}69.66 & \cellcolor{blue!15}96.66 \\
 &  & \cellcolor{blue!15}ScaleSweep & \cellcolor{blue!15}\textbf{54.69} & \cellcolor{blue!15}\textbf{88.02} & \cellcolor{blue!15}76.52 & \cellcolor{blue!15}66.61 & \cellcolor{blue!15}61.25 & \cellcolor{blue!15}69.42 & \cellcolor{blue!15}96.32 \\
\cmidrule(l){2-10}
 & \multirow[c]{4}{*}{GPTQ} & AbsMax & 53.74 & 85.82 & 77.82 & 66.48 & 64.09 & 69.59 & 96.56 \\
 &  & 4/6 & 54.16 & 86.88 & 77.82 & 66.37 & 65.43 & 70.13 & 97.31 \\
 &  & \cellcolor{blue!15}ScaleSweep$_\mathrm{MSE}$ & \cellcolor{blue!15}54.75 & \cellcolor{blue!15}86.28 & \cellcolor{blue!15}\textbf{79.11} & \cellcolor{blue!15}66.34 & \cellcolor{blue!15}\textbf{65.51} & \cellcolor{blue!15}\textbf{70.40} & \cellcolor{blue!15}\textbf{97.68} \\
 &  & \cellcolor{blue!15}ScaleSweep & \cellcolor{blue!15}\textbf{54.95} & \cellcolor{blue!15}\textbf{87.04} & \cellcolor{blue!15}77.26 & \cellcolor{blue!15}\textbf{66.98} & \cellcolor{blue!15}62.98 & \cellcolor{blue!15}69.84 & \cellcolor{blue!15}96.91 \\
\cmidrule(l){1-10}
\multirow[c]{8}{*}{WAKV} & \multirow[c]{4}{*}{RTN} & AbsMax & 49.33 & 84.23 & 76.52 & 65.12 & 62.43 & 67.53 & 93.70 \\
 &  & 4/6 & 51.38 & 83.78 & \textbf{79.85} & 65.46 & 61.72 & 68.44 & 94.96 \\
 &  & \cellcolor{blue!15}ScaleSweep$_\mathrm{MSE}$ & \cellcolor{blue!15}51.02 & \cellcolor{blue!15}85.37 & \cellcolor{blue!15}78.19 & \cellcolor{blue!15}65.64 & \cellcolor{blue!15}62.12 & \cellcolor{blue!15}68.47 & \cellcolor{blue!15}95.00 \\
 &  & \cellcolor{blue!15}ScaleSweep & \cellcolor{blue!15}\textbf{52.79} & \cellcolor{blue!15}\textbf{87.11} & \cellcolor{blue!15}77.45 & \cellcolor{blue!15}\textbf{66.32} & \cellcolor{blue!15}\textbf{62.51} & \cellcolor{blue!15}\textbf{69.24} & \cellcolor{blue!15}\textbf{96.07} \\
\cmidrule(l){2-10}
 & \multirow[c]{4}{*}{GPTQ} & AbsMax & 51.18 & 84.31 & 74.68 & 65.52 & 64.56 & 68.05 & 94.42 \\
 &  & 4/6 & 51.67 & 85.06 & \textbf{80.22} & 65.68 & 62.27 & 68.98 & 95.72 \\
 &  & \cellcolor{blue!15}ScaleSweep$_\mathrm{MSE}$ & \cellcolor{blue!15}52.35 & \cellcolor{blue!15}84.99 & \cellcolor{blue!15}79.48 & \cellcolor{blue!15}65.65 & \cellcolor{blue!15}63.30 & \cellcolor{blue!15}69.16 & \cellcolor{blue!15}95.96 \\
 &  & \cellcolor{blue!15}ScaleSweep & \cellcolor{blue!15}\textbf{53.25} & \cellcolor{blue!15}\textbf{86.35} & \cellcolor{blue!15}78.19 & \cellcolor{blue!15}\textbf{66.73} & \cellcolor{blue!15}\textbf{65.04} & \cellcolor{blue!15}\textbf{69.91} & \cellcolor{blue!15}\textbf{97.01} \\
\cmidrule(l){1-10}
\multirow[c]{8}{*}{WAKVQ} & \multirow[c]{4}{*}{RTN} & AbsMax & 47.44 & 82.41 & 76.34 & 64.38 & 61.17 & 66.35 & 92.06 \\
 &  & 4/6 & 46.88 & 83.02 & 77.45 & 65.21 & \textbf{63.06} & 67.12 & 93.14 \\
 &  & \cellcolor{blue!15}ScaleSweep$_\mathrm{MSE}$ & \cellcolor{blue!15}49.29 & \cellcolor{blue!15}\textbf{84.08} & \cellcolor{blue!15}77.63 & \cellcolor{blue!15}65.07 & \cellcolor{blue!15}62.04 & \cellcolor{blue!15}67.62 & \cellcolor{blue!15}93.83 \\
 &  & \cellcolor{blue!15}ScaleSweep & \cellcolor{blue!15}\textbf{51.70} & \cellcolor{blue!15}84.00 & \cellcolor{blue!15}\textbf{78.37} & \cellcolor{blue!15}\textbf{65.76} & \cellcolor{blue!15}60.85 & \cellcolor{blue!15}\textbf{68.14} & \cellcolor{blue!15}\textbf{94.54} \\
\cmidrule(l){2-10}
 & \multirow[c]{4}{*}{GPTQ} & AbsMax & 46.96 & 81.12 & 77.82 & 65.24 & 61.48 & 66.52 & 92.30 \\
 &  & 4/6 & 49.17 & 81.88 & 76.71 & 65.23 & 62.51 & 67.10 & 93.10 \\
 &  & \cellcolor{blue!15}ScaleSweep$_\mathrm{MSE}$ & \cellcolor{blue!15}50.87 & \cellcolor{blue!15}\textbf{84.76} & \cellcolor{blue!15}\textbf{78.93} & \cellcolor{blue!15}65.46 & \cellcolor{blue!15}64.88 & \cellcolor{blue!15}\textbf{68.98} & \cellcolor{blue!15}\textbf{95.71} \\
 &  & \cellcolor{blue!15}ScaleSweep & \cellcolor{blue!15}\textbf{51.53} & \cellcolor{blue!15}83.70 & \cellcolor{blue!15}76.34 & \cellcolor{blue!15}\textbf{65.91} & \cellcolor{blue!15}\textbf{64.96} & \cellcolor{blue!15}68.49 & \cellcolor{blue!15}95.03 \\
\bottomrule
\end{tabular}
\caption{Comparison results of ScaleSweep and baseline initialization methods under RTN and GPTQ across different settings on Qwen3-8B and Qwen3-4B in the non-thinking mode.} \label{tab:main_result_qwen_3_4B_8B}
\end{table*}

\end{document}